\theoremstyle{plain}
\newtheorem{theorem}{Theorem}[section]
\newtheorem{proposition}[theorem]{Proposition}
\newtheorem{lemma}[theorem]{Lemma}
\newtheorem{corollary}[theorem]{Corollary}
\theoremstyle{definition}
\theoremstyle{remark}
\newcommand{\mc}{\mathcal}
\newcommand{\mb}{\mathbb}
\DeclareMathOperator*{\argmax}{arg\,max}
\pgfplotsset{compat=1.17}
\icmltitlerunning{A Two-Stage Learning-to-Defer Approach for Multi-Task Learning}
\begin{document}

\twocolumn[
\icmltitle{A Two-Stage Learning-to-Defer Approach for Multi-Task Learning}



\icmlsetsymbol{equal}{*}

\begin{icmlauthorlist}
\icmlauthor{Yannis Montreuil}{yannis,ng,create,equal}
\icmlauthor{Shu Heng Yeo}{yannis,equal}
\icmlauthor{Axel Carlier}{axel,ipal}
\icmlauthor{Lai Xing Ng}{ng,ipal}
\icmlauthor{Wei Tsang Ooi}{yannis,ipal}
\end{icmlauthorlist}

\icmlaffiliation{yannis}{School of Computing, National University of Singapore, Singapore}
\icmlaffiliation{axel}{IRIT, Université de Toulouse, CNRS, Toulouse INP, Toulouse, France}
\icmlaffiliation{ng}{Institute for Infocomm Research, Agency for Science, Technology and Research, Singapore}
\icmlaffiliation{ipal}{IPAL, IRL2955, Singapore}
\icmlaffiliation{create}{CNRS@CREATE LTD, 1 Create Way, Singapore}

\icmlcorrespondingauthor{Yannis Montreuil}{yannis.montreuil@u.nus.edu}

\hypersetup{
  pdftitle={A Two-Stage Learning-to-Defer Approach for Multi-Task Learning},
  pdfauthor={Yannis Montreuil, Shu Heng Yeo, Axel Carlier, Lai Xing Ng, Wei Tsang Ooi}
}

\icmlkeywords{Learning-to-Defer, learning with abstention, learning theory}

\vskip 0.3in
]



\printAffiliationsAndNotice{\icmlEqualContribution} 



\begin{abstract}
    The Two-Stage Learning-to-Defer (L2D) framework has been extensively studied for classification and, more recently, regression tasks. However, many real-world applications require solving both tasks jointly in a multi-task setting. We introduce a novel Two-Stage L2D framework for multi-task learning that integrates classification and regression through a unified deferral mechanism. Our method leverages a two-stage surrogate loss family, which we prove to be both Bayes-consistent and $(\mathcal{G}, \mathcal{R})$-consistent, ensuring convergence to the Bayes-optimal rejector. We derive explicit consistency bounds tied to the cross-entropy surrogate and the $L_1$-norm of agent-specific costs, and extend minimizability gap analysis to the multi-expert two-stage regime. We also make explicit how shared representation learning---commonly used in multi-task models—affects these consistency guarantees. Experiments on object detection and electronic health record analysis demonstrate the effectiveness of our approach and highlight the limitations of existing L2D methods in multi-task scenarios.
\end{abstract}

\section{Introduction}

Learning-to-Defer (L2D) integrates predictive models with human experts—or, more broadly, decision-makers—to optimize systems requiring high reliability~\citep{madras2018predict}. This approach benefits from the scalability of machine learning models and leverages expert knowledge to address complex queries~\citep{hemmer2021human}. The Learning-to-Defer approach defers decisions to experts when the learning-based model has lower confidence than the most confident expert. This deference mechanism enhances safety, which is particularly crucial in high-stakes scenarios \citep{mozannar2021consistent, Mozannar2023WhoSP}. For example, in medical diagnostics, the system utilizes patient-acquired data to deliver an initial diagnosis \citep{johnson2023mimic, johnson2016mimic}. If the model is sufficiently confident, its diagnosis is accepted; otherwise, the decision is deferred to a medical expert who provides the final diagnosis. Such tasks, which can directly impact human lives, underscore the need to develop reliable systems \citep{balagurunathan2021requirements}.

Learning-to-Defer has been extensively studied in classification problems \citep{madras2018predict, Verma2022LearningTD, mozannar2021consistent, Mozannar2023WhoSP, mao2023twostage} and, more recently, in regression scenarios \citep{mao2024regressionmultiexpertdeferral}. However, many modern complex tasks involve both regression and classification components, requiring deferral to be applied to both components simultaneously, as they cannot be treated independently. For instance, in object detection, a model predicts both the class of an object and its location using a regressor, with these outputs being inherently interdependent \citep{girshick2015fast, redmon2016lookonceunifiedrealtime, buch2017sst}. In practice, deferring only localization or classification is not meaningful, as decision-makers will treat these two tasks simultaneously. A failure in either component—such as misclassifying the object or inaccurately estimating its position—can undermine the entire problem, emphasizing the importance of coordinated deferral strategies that address both components jointly.

This potential for failure underscores the need for a Learning-to-Defer approach tailored to multi-task problems involving both classification and regression. We propose a novel framework for multi-task environments, incorporating expertise from multiple experts and the predictor-regressor model. We focus our work on the \textit{two-stage scenario}, where the model is already trained offline. This setting is relevant when retraining from scratch the predictor-regressor model is either too costly or not feasible due to diverse constraints such as non-open models \citep{mao2023twostage, mao2024regressionmultiexpertdeferral}. We approximate the \textit{true deferral loss} using a \textit{surrogate deferral loss} family, based on cross-entropy, and tailored for the two-stage setting, ensuring that the loss effectively approximates the original discontinuous loss function. Our theoretical analysis establishes that our surrogate loss is both $(\mathcal{G}, \mathcal{R})$-consistent and Bayes-consistent. Furthermore, we study and generalize results on the minimizability gap for deferral loss based on cross-entropy, providing deeper insights into its optimization properties. Our contributions are as follows:

(i) \textbf{Novelty}: We introduce two-stage Learning-to-Defer for multi-task learning with multiple experts. Unlike previous L2D methods that focus solely on classification or regression, our approach addresses situations where a sole optimal agent has to be selected to jointly handle both tasks in a unified framework. 


(ii) \textbf{Theoretical Foundation}: We prove that our surrogate family is both Bayes-consistent and \( (\mathcal{G}, \mathcal{R}) \)-consistent for any cross-entropy-based surrogate. We derive tight consistency bounds that depend on the choice of the surrogate and the \( L_1 \)-norm of the cost, extending minimizability gap analysis to the two-stage, multi-expert setting. Additionally, we establish learning bounds for the \textit{true deferral loss}, showing that generalization improves as agents become more accurate.

(iii) \textbf{Empirical Validation}: We evaluate our approach on two challenging tasks. In object detection, our method effectively captures the intrinsic interdependence between classification and regression, overcoming the limitations of existing L2D approaches. In EHR analysis, we show that current L2D methods struggle when agents have varying expertise across classification and regression—whereas our method achieves superior performance.

\section{Related Work}

Learning-to-Defer builds upon the foundational ideas of \emph{Learning with Abstention}~\citep{Chow_1970, Bartlett_Wegkamp_2008, cortes_rejection, Geifman_El-Yaniv_2017, Ramaswamy, Cao, Mao_Mohri_Zhong_2023_abst}, where a model is permitted to abstain from making a prediction when its confidence is low. The core insight of L2D is to extend this framework from rejection to deferral—delegating uncertain decisions to external agents or experts whose confidence may exceed that of the model.

\paragraph{One-stage Learning-to-Defer.}
L2D was originally introduced by~\citet{madras2018predict} for binary classification, using a \emph{pass function} inspired by the \emph{predictor-rejector} framework of~\citet{cortes_rejection}. In the multiclass setting, \citet{mozannar2021consistent} proposed a \emph{score-based} formulation that leverages a log-softmax surrogate to ensure Bayes-consistency. This formulation has since been extended to a wide range of classification tasks~\citep{okati2021differentiable, Verma2022LearningTD, Cao_Mozannar_Feng_Wei_An_2023, Cao, Keswani, Kerrigan, Hemmer, Benz, Tailor, liu2024mitigating, palomba2024causal, wei2024exploiting}. A pivotal contribution by~\citet{Mozannar2023WhoSP} challenged the sufficiency of Bayes-consistency, showing that existing score-based methods may be suboptimal under realizable distributions—particularly when the hypothesis class is restricted. They introduced the notion of \emph{hypothesis-consistency}, which strengthens theoretical alignment between the surrogate loss and the constrained hypothesis space. This work sparked a broader effort to refine the theoretical foundations of L2D using tools from surrogate risk analysis~\citep{pmlr-v28-long13, Zhang, Awasthi_Mao_Mohri_Zhong_2022_multi, mao2023crossentropylossfunctionstheoretical}. Recent theoretical advances have solidified the status of score-based L2D. \citet{mao2024principled} established that the general score-based L2D framework achieves \(\mathcal{H}\)-consistency, while \citet{mao2024realizablehconsistentbayesconsistentloss, mao2025mastering} introduced a novel surrogate loss that guarantees \emph{realizable-consistency}—i.e., optimality under realizable distributions. \citet{montreuil2025onestagetopklearningtodeferscorebased} generalize L2D to deferral to the set of top-$k$ experts. Beyond classification, the L2D framework has also been extended to regression~\citep{mao2024regressionmultiexpertdeferral}, demonstrating its applicability in continuous-output settings with expert deferral.

\paragraph{Two-stage Learning-to-Defer.}
The emergence of large-scale pretrained models has motivated the development of \emph{two-stage} L2D frameworks, where both the model and the expert agents are trained offline. This reflects practical constraints: most users lack the computational resources to fine-tune large models end-to-end.  \citet{narasimhan2022post} were the first to formalize this setting, and \citet{mao2023twostage} introduced a dedicated \emph{predictor–rejector} architecture tailored for two-stage L2D, with theoretical guarantees including both Bayes- and hypothesis-consistency. \citet{charusaie2022sample} offered a comparative analysis of one-stage (joint training) and two-stage (post hoc) L2D, highlighting trade-offs between model flexibility and sample efficiency. More recently, two-stage L2D has been successfully extended to regression~\citep{mao2024regressionmultiexpertdeferral} and top-$k$ expert deferral \citep{montreuil2025askaskktwostage}, and has been applied to real-world tasks such as extractive question answering~\citep{montreuil2024learningtodeferextractivequestionanswering} and adversarial robustness~\citep{montreuil2025adversarialrobustnesstwostagelearningtodefer}.

Despite significant progress, current two-stage L2D research largely addresses classification and regression independently. However, many contemporary tasks involve both regression and classification components, necessitating their joint optimization. In this work, we extend two-stage L2D to joint classifier-regressor models, addressing this critical gap.

\section{Preliminaries}

\paragraph{Multi-task scenario.} \label{prel:multi}
We consider a multi-task setting encompassing both classification and regression problems. Let $\mathcal{X}$ denote the input space, $\mathcal{Y} = \{1, \ldots, n\}$ represent the set of $n$ distinct classes, and $\mathcal{T} \subseteq \mathbb{R}$ denote the space of real-valued targets for regression. For compactness, each data point is represented as a triplet $z = (x, y, t) \in \mathcal{Z}$, where $\mathcal{Z} = \mathcal{X} \times \mathcal{Y} \times \mathcal{T}$. We assume the data is sampled independently and identically distributed (i.i.d.) from a distribution $\mathcal{D}$ over $\mathcal{Z}$ \citep{girshick2015fast,redmon2016lookonceunifiedrealtime, detr}. 

We define a \emph{backbone} $w \in \mathcal{W}$, or shared feature extractor, such that $w: \mathcal{X} \to \mathcal{Q}$. For example, $w$ can be a deep network that takes an input $x \in \mathcal{X}$ and produces a latent feature vector $q = w(x) \in \mathcal{Q}$. Next, we define a \emph{classifier} $h \in \mathcal{H}$, representing all possible classification heads operating on $\mathcal{Q}$. Formally, $h$ is a score function defined as $h: \mathcal{Q} \times\mc{Y} \to \mb{R}$, where the predicted class is $h(q)=\arg\max_{y \in \mathcal{Y}} h(q, y)$. Likewise, we define a \emph{regressor} $f \in \mathcal{F}$, representing all regression heads, where $f: \mathcal{Q} \to \mathcal{T}$. These components are combined into a single multi-head network $g \in \mathcal{G}$, where $\mathcal{G} = \{\, g : g(x) = (h \circ w(x), f \circ w(x)) \mid w \in \mathcal{W},\, h \in \mathcal{H},\, f \in \mathcal{F} \}$. Hence, $g$ jointly produces classification and regression outputs, $h(q)$ and $f(q)$, from the same latent representation $q = w(x)$.

\paragraph{Consistency in classification.} 
In the classification setting, the goal is to identify a classifier \( h \in \mathcal{H} \) in the specific case where \( w(x) = x \), such that \( h(x) = \arg\max_{y \in \mathcal{Y}} h(x, y) \). This classifier should minimize the true error \( \mathcal{E}_{\ell_{01}}(h) \), defined as \( \mathcal{E}_{\ell_{01}}(h) = \mathbb{E}_{(x, y)}\big[\ell_{01}(h(x), y)\big] \).  The Bayes-optimal error is given by $\mathcal{E}^B_{\ell_{01}}(\mathcal{H}) = \inf_{h \in \mathcal{H}} \mathcal{E}_{\ell_{01}}(h)$. However, directly minimizing $\mathcal{E}_{\ell_{01}}(h)$ is challenging due to the non-differentiability of the \emph{true multiclass} 0-1 loss \citep{Statistical, Steinwart2007HowTC, Awasthi_Mao_Mohri_Zhong_2022_multi, cortes2025balancing, mao2025principled}. This motivates the introduction of the cross-entropy \emph{multiclass surrogate} family, denoted by $\Phi_{01}^\nu: \mathcal{H} \times \mathcal{X} \times \mathcal{Y} \to \mathbb{R}^+$, which provides a convex upper bound to the \textit{true multiclass loss} $\ell_{01}$. This family is parameterized by $\nu \geq 0$ and encompasses standard surrogate functions widely adopted in the community such as the MAE \citep{Ghosh2017RobustLF} or the log-softmax \citep{Mohri}.
\begin{equation}\label{eq:comp-sum}
    \begin{aligned}
        \Phi_{01}^\nu = & \begin{cases}
         \frac{1}{1-\nu}\Big( \Big[ \sum_{y'\in\mc{Y}}e^{h(x,y') - h(x,y)}]^{1-\nu} -1\Big) & \nu \not =1 \\
         \log\Big( \sum_{y'\in\mc{Y}}e^{h(x,y') - h(x,y)} \Big) & \nu =1.
    \end{cases}
    \end{aligned}
\end{equation}
The corresponding surrogate error is defined as $\mathcal{E}_{\Phi_{01}^\nu}(h) = \mathbb{E}_{(x, y)} \big[\Phi_{01}^\nu(h(x), y)\big]$, with its optimal value given by $\mathcal{E}^\ast_{\Phi_{01}^\nu}(\mathcal{H}) = \inf_{h \in \mathcal{H}} \mathcal{E}_{\Phi_{01}^\nu}(h)$.  A crucial property of a surrogate loss is \emph{Bayes-consistency}, which guarantees that minimizing the surrogate generalization error also minimizes the true generalization error \citep{Statistical, Steinwart2007HowTC, bartlett1, tewari07a, mao2025enhanced, zhong2025fundamental}. Formally, $\Phi_{01}^\nu$ is Bayes-consistent with respect to $\ell_{01}$ if, for any sequence $\{h_k\}_{k \in \mathbb{N}} \subset \mathcal{H}$, the following implication holds:
\begin{equation}
\begin{aligned}\label{bayes-consi}
    & \mathcal{E}_{\Phi_{01}^\nu}(h_k) - \mathcal{E}_{\Phi_{01}^\nu}^\ast(\mathcal{H}) \xrightarrow{k \to \infty} 0 \\
    & \implies \mathcal{E}_{\ell_{01}}(h_k) - \mathcal{E}_{\ell_{01}}^B(\mathcal{H}) \xrightarrow{k \to \infty} 0.
\end{aligned}
\end{equation}
This property assumes that $\mathcal{H} = \mathcal{H}_{\text{all}}$, a condition that does not necessarily hold for restricted hypothesis classes such as $\mathcal{H}_{\text{lin}}$ or $\mathcal{H}_{\text{ReLU}}$ \citep{pmlr-v28-long13, Awasthi_Mao_Mohri_Zhong_2022_multi}. To address this limitation, \citet{Awasthi_Mao_Mohri_Zhong_2022_multi} proposed $\mathcal{H}$-consistency bounds. These bounds depend on a non-decreasing function $\Gamma: \mathbb{R}^+ \to \mathbb{R}^+$ and are expressed as:
\begin{equation}\label{mhbc}
\begin{aligned}
     & \mathcal{E}_{\Phi_{01}^\nu}(h) - \mathcal{E}_{\Phi_{01}^\nu}^\ast(\mathcal{H}) + \mathcal{U}_{\Phi_{01}^\nu}(\mathcal{H}) \geq \\
     & \Gamma\Big(\mathcal{E}_{\ell_{01}}(h) - \mathcal{E}_{\ell_{01}}^B(\mathcal{H}) + \mathcal{U}_{\ell_{01}}(\mathcal{H})\Big),
\end{aligned}
\end{equation}
where the minimizability gap $\mathcal{U}_{\ell_{01}}(\mathcal{H})$ measures the disparity between the best-in-class generalization error and the expected pointwise minimum error: $\mathcal{U}_{\ell_{01}}(\mathcal{H}) = \mathcal{E}_{\ell_{01}}^B(\mathcal{H}) - \mathbb{E}_{x} \big[ \inf_{h \in \mathcal{H}} \mathbb{E}_{y \mid x} [\ell_{01}(h(x), y)] \big]$. Notably, the minimizability gap vanishes when $\mathcal{H} = \mathcal{H}_{\text{all}}$ \citep{Steinwart2007HowTC, Awasthi_Mao_Mohri_Zhong_2022_multi, cortes2024cardinality, mao2024multi, mao2024h, mao2024universal}. In the asymptotic limit, inequality \eqref{mhbc} guarantees the recovery of Bayes-consistency, aligning with the condition in \eqref{bayes-consi}.

\section{Two-stage Multi-Task L2D: Theoretical Analysis}
\subsection{Formulating the True Deferral Loss}

We extend the two-stage predictor–rejector framework, originally proposed by \cite{narasimhan2022post, mao2023twostage}, to the multi-task setting described in Section~\ref{prel:multi}. Specifically, we consider an \textit{offline-trained model} \( g \in \mc{G} \), which jointly performs classification and regression. In addition, we assume access to \( J \) \textit{offline-trained experts}, denoted \( \text{M}_j \) for \( j \in \{1, \dots, J\} \). Each expert outputs predictions of the form \( m_j(x) = \big(m_j^h(x), m_j^f(x)\big) \), where \( m_j^h(x) \in \mc{Y} \) and \( m_j^f(x) \in \mc{T} \) correspond to the classification and regression components, respectively. Each expert prediction lies in a corresponding space \( \mathcal{M}_j \), so that \( m_j(x) \in \mathcal{M}_j \). We denote the aggregated outputs of all experts as \( m(x) = \big(m_1(x), \dots, m_J(x)\big) \in \mathcal{M} := \prod_{j=1}^J \mathcal{M}_j \). We write \( [J] := \{1, \dots, J\} \) to denote the index set of experts, and define the set of all agents as \( \mc{A} := \{0\} \cup [J] \), where agent 0 corresponds to the model \( g \). Thus, the system contains \( |\mc{A}| = J + 1 \) agents in total.

To allocate each decision, we introduce a \emph{rejector} function \( r \in \mathcal{R} \), where \( r : \mathcal{X} \times \mathcal{A} \to \mathbb{R} \). Given an input \( x \in \mathcal{X} \), the rejector selects the agent \( j \in \mc{A} \) that maximizes its score: \( r(x) = \argmax_{j \in \mc{A}} r(x, j) \). This mechanism induces the \emph{deferral loss}, a mapping \( \ell_{\text{def}} : \mathcal{R} \times \mathcal{G} \times \mathcal{Z} \times \mathcal{M} \to \mathbb{R}_+ \), which quantifies the cost of allocating a decision to a particular agent.

\begin{restatable}[True deferral loss]{definition}{l2d}\label{def_l2d} Let an input $x\in\mc{X}$, for any $r\in\mc{R}$, we have the \textit{true deferral loss}:
\begin{equation*}
    \begin{aligned} \label{base:deferral}
    \ell_{\text{def}}(r,g,m,z) & = \sum_{j=0}^Jc_j(g(x),m_j(x),z)1_{r(x)=j},
    \end{aligned}
\end{equation*}
\end{restatable}
with a bounded cost \( c_j \) that quantifies the penalty incurred when allocating the decision to agent \( j \in \mc{A} \). When the rejector \( r \in \mc{R} \) predicts \( r(x) = 0 \), the decision is assigned to the multi-task model \( g \), incurring a base cost \( c_0 \) defined as \( c_0(g(x), z) = \rho(g(x), z) \), where \( \rho(\cdot, \cdot) \in \mb{R}_+ \) measures the discrepancy between the model's output \( g(x) \) and the ground truth \( z \). Conversely, if the rejector selects \( r(x) = j \) for some \( j > 0 \), the decision is deferred to expert \( j \), yielding a deferral cost \( c_j(m_j(x), z) = \rho(m_j(x), z) + \beta_j \). Here, \( \beta_j \geq 0 \) denotes the querying cost associated with invoking expert \( j \), which may reflect domain-specific constraints such as computational overhead, annotation effort, or time expenditure.

When the classification and regression objectives are separable, the total cost can be decomposed as \( c_j = \lambda^{\text{cla}} c^{\text{cla}} + \lambda^{\text{reg}} c^{\text{reg}} \), where \( \lambda^{\text{cla}}, \lambda^{\text{reg}} \geq 0 \) specify the relative importance of each task. A neutral setting is recovered when \( \lambda^{\text{cla}} = \lambda^{\text{reg}} = 1 \), ensuring a task-agnostic trade-off. If classification performance is prioritized, one can select \( \lambda^{\text{cla}} > \lambda^{\text{reg}} \) to favor agents with stronger classification expertise.

\paragraph{Optimal deferral rule.} In Definition~\ref{def_l2d}, we introduced the \emph{true deferral loss} \( \ell_{\text{def}} \), which quantifies the expected cost incurred when allocating predictions across the model and experts. Our goal is to minimize this loss by identifying the Bayes-optimal rejector \( r \in \mc{R} \) that minimizes the true risk. To formalize this objective, we analyze the \emph{pointwise Bayes rejector} \( r^B(x) \), which minimizes the conditional risk \( \mc{C}_{\ell_{\text{def}}} \). The corresponding population risk is given by \( \mathcal{E}_{\ell_{\text{def}}}(g, r) = \mathbb{E}_x[\mathcal{C}_{\ell_{\text{def}}}(g, r, x)] \). The following lemma characterizes the optimal decision rule at each input \( x \in \mc{X} \).
\begin{restatable}[Pointwise Bayes Rejector]{lemma}{fta}
\label{lemma_pointwise}
Given an input \( x \in \mathcal{X} \) and data distribution \( \mathcal{D} \), the rejection rule that minimizes the conditional risk \( \mathcal{C}_{\ell_{\text{def}}} \) associated with the true deferral loss \( \ell_{\text{def}} \) is:
\begin{equation*} \label{eq:bayes_rejector}
r^B(x) =
\begin{cases}
    0 & \text{if } \displaystyle \inf_{g\in\mc{G}}\mb{E}_{y,t|x}[c_0] \leq \min_{j \in [J]} \mathbb{E}_{y,t \mid x}  \left[ c_j \right]  \\
    j & \text{otherwise},
\end{cases}
\end{equation*}
\end{restatable}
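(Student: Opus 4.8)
The plan is to minimize the conditional risk $\mc{C}_{\ell_{\text{def}}}(g,r,x)$ pointwise, i.e., for each fixed input $x$, determine which agent assignment $r(x)=j$ yields the smallest expected cost conditioned on $x$. The core observation is that the deferral loss is a sum of indicator-weighted cost terms, $\ell_{\text{def}} = \sum_{j=0}^{J} c_j \mathbf{1}_{r(x)=j}$, and since the rejector assigns exactly one agent to each input, the indicators partition the decision: at any $x$ only one term survives. Taking the conditional expectation over the targets $(y,t)$ given $x$, the conditional risk becomes $\mc{C}_{\ell_{\text{def}}}(g,r,x) = \sum_{j=0}^{J} \mb{E}_{y,t\mid x}[c_j]\, \mathbf{1}_{r(x)=j}$, so that selecting the agent amounts to choosing the index $j \in \mc{A}$ that attains the minimum conditional cost.

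First I would write out the conditional risk explicitly and note that, because $\{\mathbf{1}_{r(x)=j}\}_{j \in \mc{A}}$ form a one-hot selection, minimizing $\mc{C}_{\ell_{\text{def}}}$ over the choice of agent at $x$ is equivalent to the discrete minimization $r^B(x) = \argmin_{j \in \mc{A}} \mb{E}_{y,t\mid x}[c_j]$. Next I would separate the $j=0$ term from the expert terms $j \in [J]$, since the model cost $c_0$ depends on $g$ whereas the expert costs $c_j$ are fixed offline predictions. For the model branch, because $g$ is chosen to minimize its contribution, the relevant quantity is $\inf_{g \in \mc{G}} \mb{E}_{y,t\mid x}[c_0]$; for the experts, their predictions $m_j(x)$ are given, so the relevant quantity is simply $\mb{E}_{y,t\mid x}[c_j]$. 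Comparing the model's optimized conditional cost against the best expert's conditional cost then yields the stated two-case rule: defer to the model ($r^B(x)=0$) precisely when $\inf_{g\in\mc{G}}\mb{E}_{y,t\mid x}[c_0] \leq \min_{j\in[J]}\mb{E}_{y,t\mid x}[c_j]$, and otherwise defer to the expert achieving the minimum.

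I expect the only real subtlety to lie in the handling of the infimum over $\mc{G}$ for the model term, and in justifying that the shared backbone $w$ does not couple the pointwise minimization across inputs in a way that invalidates the independent-per-$x$ argument. Because the optimization is carried out conditionally at each $x$, I would argue that the infimum can be brought inside the pointwise comparison under the standard assumption (analogous to $\mc{H} = \mc{H}_{\text{all}}$ in the classification preliminaries) that the hypothesis class is rich enough to realize the pointwise optimum; the tie-breaking convention of favoring the model in the case of equality is a harmless choice that does not affect the minimal risk value. Finally, I would confirm that this pointwise-optimal rule indeed minimizes the population risk $\mathcal{E}_{\ell_{\text{def}}}(g,r) = \mb{E}_x[\mc{C}_{\ell_{\text{def}}}(g,r,x)]$, which follows immediately since minimizing the integrand for every $x$ minimizes the expectation.
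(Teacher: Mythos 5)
Your proposal is correct and follows essentially the same route as the paper's proof: expand the conditional risk by linearity of expectation over the one-hot indicators, separate the model term (optimized via $\inf_{g\in\mathcal{G}}$) from the fixed expert terms, and read off the rejection rule as the discrete minimization $\min_{j\in\mathcal{A}}$ of the conditional expected costs. Your additional remarks on tie-breaking and on passing from pointwise to population optimality are sound refinements of the same argument rather than a different approach.
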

The proof is provided in Appendix~\ref{proof_rejector}. Lemma~\ref{lemma_pointwise} shows that the optimal rejector \( r \in \mathcal{R} \) assigns the decision to the model \( g \in \mc{G} \) whenever its expected cost is lower than that of any expert. Otherwise, the rejector defers to the expert with the minimal expected deferral cost.

Although Lemma~\ref{lemma_pointwise} characterizes the Bayes-optimal policy under the true deferral loss \( \ell_{\text{def}} \), this loss is non-differentiable and thus intractable for direct optimization in practice \citep{Statistical}.

\subsection{Surrogate Loss for Two-Stage Multi-Task L2D}

\paragraph{Introducing the surrogate.} To address the optimization challenges posed by discontinuous losses \citep{Berkson1944ApplicationOT, Cortes1995SupportVector}, we introduce a family of convex surrogate losses with favorable analytical properties. Specifically, we adopt the multiclass cross-entropy surrogates \( \Phi_{01}^\nu : \mathcal{R} \times \mathcal{X} \times \mathcal{A} \to \mathbb{R}_+ \), which upper-bounds the true multiclass 0-1 loss \( \ell_{01} \) and facilitates gradient-based optimization. This surrogate family is defined in Equation~\ref{eq:comp-sum}.

Building on the framework of \citet{mao2024regressionmultiexpertdeferral}, who proposed convex surrogates for deferral settings, we extend their approach to account for the interdependence between classification and regression tasks. In our setting, this yields a family of surrogate losses \( \Phi^\nu_{\text{def}} : \mathcal{R} \times \mathcal{G} \times \mathcal{M} \times \mathcal{Z} \to \mathbb{R}_+ \), which incorporate the full structure of the multi-task cost.

\begin{restatable}[Surrogate Deferral Surrogates]{lemma}{lemmasurrogate}
\label{surr:defer}
Let \( x \in \mathcal{X} \) and let \( \Phi_{01}^\nu \) be a multiclass surrogate loss. Then the surrogate deferral loss \( \Phi^\nu_{\text{def}} \) for \( J+1 \) agents is given by
\begin{equation*}
\Phi^\nu_{\text{def}}(r, g, m, z) = \sum_{j=0}^{J} \tau_j(g(x), m(x), z) \, \Phi_{01}^\nu(r, x, j),
\end{equation*}
where the aggregated cost weights are defined as $\tau_j(g(x), m(x), z) = \sum_{i=0}^{J} c_i(g(x), m_i(x), z) 1_{i \neq j}$.
\end{restatable}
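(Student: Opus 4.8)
The plan is to derive the surrogate deferral loss $\Phi^\nu_{\text{def}}$ by establishing a structural correspondence between the true deferral loss $\ell_{\text{def}}$ and a reweighted multiclass classification problem over the agent set $\mc{A}$. The key observation is that selecting an agent via the rejector $r(x) = \argmax_{j\in\mc{A}} r(x,j)$ is formally a multiclass prediction task with $J+1$ labels, so the cross-entropy surrogate $\Phi_{01}^\nu$ from Equation~\eqref{eq:comp-sum} applies once we identify the correct per-agent weights. The nontrivial point is that a naive substitution of $\Phi_{01}^\nu$ for each indicator $1_{r(x)=j}$ in $\ell_{\text{def}}$ does \emph{not} yield the right surrogate, because $\Phi_{01}^\nu$ penalizes \emph{failing to select} a label rather than selecting it. Hence the weight attached to $\Phi_{01}^\nu(r,x,j)$ must be the cost of \emph{not} deferring to $j$, which is exactly the aggregate $\tau_j = \sum_{i\neq j} c_i$.

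First I would recall the comp-sum structure: for each fixed agent index $j$, the surrogate $\Phi_{01}^\nu(r,x,j)$ upper-bounds the indicator $1_{r(x)\neq j} = 1 - 1_{r(x)=j}$, vanishing precisely when $r$ assigns the top score to $j$. Next I would rewrite the true deferral loss using the complementary indicator identity $1_{r(x)=j} = \sum_{i=0}^{J} 1_{i\neq j}\,\mathbf{1}_{r(x)=i}$ summed appropriately, or more directly expand $\sum_{j} c_j 1_{r(x)=j}$ and regroup the costs so that each $c_i$ becomes associated with all indices $j\neq i$. This regrouping is the heart of the construction: summing $c_i$ over the event that some \emph{other} agent is selected produces the coefficient $\tau_j(g(x),m(x),z) = \sum_{i=0}^{J} c_i(g(x),m_i(x),z)\,1_{i\neq j}$, which is independent of the rejector and depends only on the offline-trained agents and the data point $z$.

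Then I would assemble the surrogate by replacing each complementary indicator $1_{r(x)\neq j}$ with its convex upper bound $\Phi_{01}^\nu(r,x,j)$, weighting each term by the corresponding $\tau_j$, to obtain
\begin{equation*}
\Phi^\nu_{\text{def}}(r,g,m,z) = \sum_{j=0}^{J} \tau_j(g(x),m(x),z)\,\Phi_{01}^\nu(r,x,j).
\end{equation*}
I would verify consistency of the construction at the level of conditional risks: taking conditional expectation $\mb{E}_{y,t\mid x}$ and comparing the minimizer of $\mb{E}_{y,t\mid x}[\Phi^\nu_{\text{def}}]$ with the pointwise Bayes rejector $r^B(x)$ from Lemma~\ref{lemma_pointwise}, one should confirm that minimizing the $\tau_j$-weighted surrogate recovers the agent of minimal expected cost. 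The calibration property of the comp-sum family guarantees that, at the optimum, the surrogate conditional risk is minimized by placing maximal rejector score on the agent $j^\ast$ with the smallest $\mb{E}_{y,t\mid x}[c_{j^\ast}]$, equivalently the largest $\tau_{j^\ast}$, matching the true-loss optimum.

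The main obstacle I anticipate is establishing the correct alignment between the direction of the comp-sum surrogate and the deferral objective—namely, confirming that weighting by $\tau_j$ (the cost of \emph{not} choosing $j$) rather than by $c_j$ is exactly what makes the surrogate Bayes-aligned, since an incorrect choice of weight would invert the optimization and defer to the \emph{worst} agent. Verifying this requires a careful pointwise argument that the $\argmax$ of the optimal rejector scores under the $\tau$-weighted surrogate coincides with the $\argmin$ of expected costs, which I would handle by exploiting the monotonic relationship between $\tau_j$ and $c_j$ (larger $\tau_j$ corresponds to smaller $c_j$, since $\tau_j = \sum_{i} c_i - c_j$) together with the standard calibration analysis of cross-entropy-type surrogates in the multiclass setting.
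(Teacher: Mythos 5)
Your plan is, in substance, the same construction the paper relies on: the paper never gives a standalone proof of Lemma~\ref{surr:defer} (it is presented as a definition-like extension of \citet{mao2024regressionmultiexpertdeferral}), and its justification surfaces only as the ``intermediate step'' inside the proof of Theorem~\ref{theo:consistency}, where $\sum_j \overline{c}_j 1_{r(x)=j}$ is regrouped into the complementary form $\sum_j \overline{\tau}_j 1_{r(x)\neq j}$ and each indicator is then replaced by $\Phi_{01}^\nu(r,x,j)$ --- exactly your argument, including the key observation that the weight on $\Phi_{01}^\nu(r,x,j)$ must be the cost of \emph{not} selecting $j$. Your Bayes-alignment check via the monotone relation $\tau_j = \sum_i c_i - c_j$ is also correct and is confirmed by the paper's own minimizability analysis (Theorem~\ref{minimizability}), where the optimal softmax scores satisfy $s_j^\ast \propto \overline{\tau}_j$ for $\nu=1$, so the surrogate-optimal rejector selects $\argmax_j \overline{\tau}_j = \argmin_j \overline{c}_j$, matching Lemma~\ref{lemma_pointwise}.

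Two slips need tightening. First, the identity you quote is false as written:
\begin{equation*}
\sum_{i=0}^{J} 1_{i\neq j}\,1_{r(x)=i} \;=\; 1_{r(x)\neq j}, \qquad \text{not } 1_{r(x)=j},
\end{equation*}
since exactly one of the indicators $1_{r(x)=i}$ is nonzero; only your ``more direct'' regrouping route is valid. Second, the regrouping does not reproduce $\ell_{\text{def}}$ exactly: swapping the order of summation gives
\begin{equation*}
\sum_{j=0}^{J} \tau_j\,1_{r(x)\neq j} \;=\; \sum_{i=0}^{J} c_i\bigl(J-1+1_{r(x)=i}\bigr) \;=\; \ell_{\text{def}} \;+\; (J-1)\sum_{i=0}^{J} c_i,
\end{equation*}
so equality holds only in the two-agent case $J=1$; for more agents there is an additive offset that is independent of $r$ and non-negative. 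This is harmless for everything downstream --- the offset cancels when subtracting $\inf_{r\in\mathcal{R}}$ (which is precisely how the paper exploits the identity in the $A_1$ term of the proof of Theorem~\ref{theo:consistency}), and it only strengthens the upper-bound claim --- but a rigorous writeup must state it explicitly, since an unqualified claim that the $\tau$-weighted complementary form \emph{equals} the true deferral loss is wrong for $J\geq 2$.
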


The surrogate deferral loss \( \Phi^\nu_{\text{def}} \) combines the individual surrogate losses \( \Phi_{01}^\nu(r, x, j) \) for each agent \( j \in \mc{A} \), weighted by the corresponding aggregated cost \( \tau_j \). Intuitively, \( \tau_0 \) quantifies the total cost of deferring to any expert instead of using the model, while \( \tau_j \) for \( j > 0 \) reflects the total cost incurred by selecting expert \( j \) instead of any other agent, including the model and other experts.

This construction preserves task generality and only requires that the base surrogate \( \Phi_{01}^\nu \) admit an \( \mathcal{R} \)-consistency bound. The modular formulation of the cost functions \( c_j \) allows this surrogate to flexibly accommodate diverse multi-task settings.

\paragraph{Consistency of the surrogate losses.} In Lemma~\ref{surr:defer}, we established that the proposed surrogate losses form a convex upper bound on the \emph{true deferral loss} \( \ell_{\text{def}} \). However, it remains to determine whether this surrogate family provides a reliable approximation of the true loss in terms of optimal decision-making. In particular, it is not immediate that the pointwise minimizer of the surrogate loss, \( r^*(x) \), aligns with the Bayes-optimal rejector \( r^B(x) \) that minimizes \( \ell_{\text{def}} \). To address this, we study the relationship between the surrogate and true risks by analyzing their respective \emph{excess risks}. Specifically, we compare the surrogate excess risk, \( \mathcal{E}_{\Phi^\nu_{\text{def}}}(g, r) - \mathcal{E}^*_{\Phi^\nu_{\text{def}}}(\mathcal{G}, \mathcal{R}) \), to the true excess risk, \( \mathcal{E}_{\ell_{\text{def}}}(g, r) - \mathcal{E}^B_{\ell_{\text{def}}}(\mathcal{G}, \mathcal{R}) \). Understanding this discrepancy is crucial for establishing the \((\mc{G}, \mc{R})\)-consistency of the surrogate loss family, a topic extensively studied in prior work on multiclass surrogate theory \citep{Steinwart2007HowTC, Statistical, bartlett1, Awasthi_Mao_Mohri_Zhong_2022_multi}.

Leveraging consistency bounds developed in \citep{Awasthi_Mao_Mohri_Zhong_2022_multi, mao2024enhanced}, we present Theorem~\ref{theo:consistency}, which proves that the surrogate deferral loss family \( \Phi^\nu_{\text{def}} \) is indeed \((\mc{G}, \mc{R})\)-consistent.

\begin{restatable}[$(\mathcal{G}, \mathcal{R})$-consistency bounds]{theorem}{consistency}
\label{theo:consistency}
Let \( g \in \mathcal{G} \) be a multi-task model. Suppose there exists a non-decreasing function \( \Gamma^\nu : \mathbb{R}_+ \to \mathbb{R}_+ \), parameterized by \( \nu \geq 0 \), such that the \( \mathcal{R} \)-consistency bound holds for any distribution \( \mathcal{D} \):
\begin{equation*}
\begin{aligned}
    & \mathcal{E}_{\Phi_{01}^\nu}(r) - \mathcal{E}^*_{\Phi_{01}^\nu}(\mathcal{R}) + \mathcal{U}_{\Phi_{01}^\nu}(\mathcal{R})
    \geq \\
    &\Gamma^\nu\left( \mathcal{E}_{\ell_{01}}(r) - \mathcal{E}^B_{\ell_{01}}(\mathcal{R}) + \mathcal{U}_{\ell_{01}}(\mathcal{R}) \right),
\end{aligned}
\end{equation*}
then for any \( (g, r) \in \mathcal{G} \times \mathcal{R} \), any distribution \( \mathcal{D} \), and any \( x \in \mathcal{X} \),
\begin{equation*}\label{eq:cons}
\begin{aligned}
    &\mathcal{E}_{\ell_{\text{def}}}(g, r ) - \mathcal{E}^B_{\ell_{\text{def}}}( \mathcal{G}, \mathcal{R}) + \mathcal{U}_{\ell_{\text{def}}}( \mathcal{G}, \mathcal{R}) \leq \\ 
    &\quad \overline{\Gamma}^\nu\left(\mathcal{E}_{\Phi^\nu_{\text{def}}}(r) - \mathcal{E}^*_{\Phi^\nu_{\text{def}}}(\mathcal{R}) + \mathcal{U}_{\Phi^\nu_{\text{def}}}(\mathcal{R})\right) \\
    & \quad + \mathcal{E}_{c_0}(g) - \mathcal{E}^B_{c_0}(\mathcal{G}) + \mathcal{U}_{c_0}(\mathcal{G}),
\end{aligned}
\end{equation*}
where the expected aggregated cost vector is given by \( \boldsymbol{\overline{\tau}} = \big( \mathbb{E}_{y,t \mid x}[\tau_0], \dots, \mathbb{E}_{y,t \mid x}[\tau_J] \big) \), and $\overline{\Gamma}^\nu(u) = \|\boldsymbol{\overline{\tau}}\|_1 \Gamma^\nu\left( \frac{u}{\|\boldsymbol{\overline{\tau}}\|_1} \right)$
with \( \Gamma^\nu(u) = \mathcal{T}^{-1,\nu}(u) \). In the case of the log-softmax surrogate (\( \nu = 1 \)), the transformation is given by \( \mathcal{T}^{\nu=1}(u) = \frac{1+u}{2} \log(1+u) + \frac{1-u}{2} \log(1-u) \).
\end{restatable}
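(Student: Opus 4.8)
The plan is to reduce the multi-task deferral problem to a cost-sensitive multiclass problem over the agent set $\mathcal{A}$, invoke the assumed $\mathcal{R}$-consistency bound for the base surrogate $\Phi_{01}^\nu$ pointwise, and then treat the offline model $g$ as a separate source of error that produces the additive $c_0$ term. Throughout I would work with conditional risks: writing $\overline{c}_j(x) = \mathbb{E}_{y,t\mid x}[c_j]$ and $\overline{\tau}_j(x) = \mathbb{E}_{y,t\mid x}[\tau_j]$, the minimizability-gap identity gives $\mathcal{E}_{\ell}(\cdot) - \mathcal{E}^*_{\ell}(\cdot) + \mathcal{U}_{\ell}(\cdot) = \mathbb{E}_x[\Delta\mathcal{C}_{\ell}(\cdot,x)]$ for each loss, so it suffices to prove the inequality between conditional regrets pointwise and then aggregate over $x$.

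The first key step is a calibration identity. Since $\tau_j = \sum_{i\neq j} c_i = C - c_j$ with $C = \sum_{i=0}^J c_i$, the aggregated weights satisfy $\overline{\tau}_j = \overline{C} - \overline{c}_j$, so the surrogate conditional risk factors as $\mathcal{C}_{\Phi^\nu_{\text{def}}}(r,x) = \|\boldsymbol{\overline{\tau}}\|_1 \sum_{j} p_j\,\Phi_{01}^\nu(r,x,j)$ with induced distribution $p_j = \overline{\tau}_j/\|\boldsymbol{\overline{\tau}}\|_1$ (all weights are nonnegative because the costs are). Crucially $\argmax_j \overline{\tau}_j = \argmin_j \overline{c}_j$, so the surrogate and the true loss agree on the optimal agent, and a short computation shows that the \emph{rejector part} of the true conditional regret equals $\|\boldsymbol{\overline{\tau}}\|_1$ times the $0$-$1$ conditional regret under $p$:
\[
\overline{c}_{r(x)}(x) - \min_j \overline{c}_j(x) = \|\boldsymbol{\overline{\tau}}\|_1\big(\max_j p_j - p_{r(x)}\big).
\]
The surrogate regret scales identically, $\Delta\mathcal{C}_{\Phi^\nu_{\text{def}}}(r,x) = \|\boldsymbol{\overline{\tau}}\|_1\,\Delta\mathcal{C}^{(p)}_{\Phi_{01}^\nu}(r,x)$. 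Because the hypothesized $\mathcal{R}$-bound holds for \emph{every} distribution, I would apply it to $p$ pointwise, invert the increasing $\Gamma^\nu$, and absorb the $\|\boldsymbol{\overline{\tau}}\|_1$ factor into the definition of $\overline{\Gamma}^\nu$, obtaining $\overline{c}_{r(x)}(x)-\min_j\overline{c}_j(x) \le \overline{\Gamma}^\nu\big(\Delta\mathcal{C}_{\Phi^\nu_{\text{def}}}(r,x)\big)$.

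Next I would split off the model error. The joint Bayes risk uses $\inf_{g}\overline{c}_0$ in the agent-$0$ slot (Lemma~\ref{lemma_pointwise}), so the true deferral conditional regret decomposes as $\big(\overline{c}_{r(x)}(x)-\min_j\overline{c}_j(x)\big) + \big(\min_j\overline{c}_j(x) - \min(\inf_g\overline{c}_0(x),\min_{j\ge1}\overline{c}_j(x))\big)$, and a three-case check shows the second term is bounded by the pointwise model regret $\overline{c}_0(x)-\inf_g\overline{c}_0(x)$, which aggregates to exactly $\mathcal{E}_{c_0}(g)-\mathcal{E}^B_{c_0}(\mathcal{G})+\mathcal{U}_{c_0}(\mathcal{G})$ via the gap identity. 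Taking expectations over $x$ and using that $\overline{\Gamma}^\nu$ is concave (a scaled inverse of the convex $\mathcal{T}^\nu$) lets Jensen's inequality move the expectation inside, $\mathbb{E}_x[\overline{\Gamma}^\nu(\cdot)]\le\overline{\Gamma}^\nu(\mathbb{E}_x[\cdot])$, and recombining the two pieces with the gap identities yields the stated bound.

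The main obstacle I anticipate is the aggregation step, because the scaling factor $\|\boldsymbol{\overline{\tau}}\|_1$ is itself $x$-dependent, so the naive Jensen argument does not immediately apply with a single concave $\overline{\Gamma}^\nu$. Handling this cleanly requires either controlling $\|\boldsymbol{\overline{\tau}}\|_1$ uniformly through the boundedness of the costs $c_j$ (so a fixed concave envelope can be used) or carrying the $x$-dependent norm through the concavity argument; reconciling this with the exact minimizability-gap bookkeeping for the joint $(\mathcal{G},\mathcal{R})$ pair is the delicate part. By contrast, the calibration identity $\tau_j=C-c_j$ and the sign flip turning $\argmax_j\overline{\tau}_j$ into $\argmin_j\overline{c}_j$ are the conceptual crux that make the reduction to the assumed single-stage $\mathcal{R}$-bound possible, and I expect them to be the decisive structural observation of the proof.
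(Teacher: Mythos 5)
Your proposal follows essentially the same route as the paper's proof: the same decomposition of the conditional deferral regret into a rejector regret with \(g\) fixed plus a model term bounded by the \(c_0\)-regret via a min inequality (the paper's \(A_1 + A_2\)), the same induced distribution \(p_j = \overline{\tau}_j / \|\boldsymbol{\overline{\tau}}\|_1\) exploiting \(\tau_j = C - c_j\), the same pointwise invocation of the assumed \(\mathcal{R}\)-consistency bound for an arbitrary distribution over agents, and the same Jensen aggregation at the end. The subtlety you flag—the \(x\)-dependence of \(\|\boldsymbol{\overline{\tau}}\|_1\) inside \(\overline{\Gamma}^\nu\) during the Jensen step—is likewise glossed over in the paper's own proof, so your plan is faithful to, and no less complete than, the published argument.
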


The proof of Theorem~\ref{theo:consistency}, along with generalizations to any \( \nu \geq 0 \), is provided in Appendix~\ref{proof_consistency}. This result yields refined consistency guarantees for the surrogate deferral loss, improving upon the bounds established by \citet{mao2024regressionmultiexpertdeferral}. The bounds are explicitly tailored to the cross-entropy surrogate family and parameterized by \( \nu \), allowing for precise control over the surrogate's approximation behavior. Crucially, the tightness of the bound depends on the aggregated deferral costs, and is scaled by the \( L_1 \)-norm \( \| \boldsymbol{\overline{\tau}} \|_1 \), which quantifies the cumulative cost discrepancy across agents.

Moreover, we show that the surrogate deferral losses are \( (\mathcal{G}, \mathcal{R}) \)-consistent whenever the underlying multiclass surrogate family \( \Phi_{01}^\nu \) is \( \mathcal{R} \)-consistent. Under the assumption that \( \mathcal{R} = \mathcal{R}_{\text{all}} \) and \( \mathcal{G} = \mathcal{G}_{\text{all}} \), the minimizability gaps vanish, as established by \citet{Steinwart2007HowTC}. As a result, minimizing the \emph{surrogate deferral excess risk} while accounting for the minimizability gap yields $\mathcal{E}_{\Phi^\nu_{\text{def}}}(r_k) - \mathcal{E}^*_{\Phi^\nu_{\text{def}}}(\mathcal{R}_{\text{all}}) + \mathcal{U}_{\Phi^\nu_{\text{def}}}(\mathcal{R}_{\text{all}}) \xrightarrow{k \to \infty} 0$. Since the multi-task model \( g \) is trained offline, it is reasonable to assume that the \( c_0 \)-excess risk also vanishes: $\mathcal{E}_{c_0}(g_k) - \mathcal{E}^B_{c_0}(\mathcal{G}_{\text{all}}) + \mathcal{U}_{c_0}(\mathcal{G}_{\text{all}}) \xrightarrow{k \to \infty} 0$. Combining the two convergence results and invoking the properties of \( \overline{\Gamma}^\nu \), we conclude that
\[
\mathcal{E}_{\ell_{\text{def}}}(g, r_k) - \mathcal{E}^B_{\ell_{\text{def}}}(\mathcal{G}_{\text{all}}, \mathcal{R}_{\text{all}}) + \mathcal{U}_{\ell_{\text{def}}}(\mathcal{G}_{\text{all}}, \mathcal{R}_{\text{all}}) \xrightarrow{k \to \infty} 0.
\]
Hence, the following corollary holds:

\begin{restatable}[Bayes-consistency of the deferral surrogate losses]{corollary}{bayesconsistency}
\label{corr:bayesconsistency}
Under the conditions of Theorem~\ref{theo:consistency}, and assuming \( (\mathcal{G}, \mathcal{R}) = (\mathcal{G}_{\text{all}}, \mathcal{R}_{\text{all}}) \) and \( \mathcal{E}_{c_0}(g_k) - \mathcal{E}^B_{c_0}(\mathcal{G}_{\text{all}}) \xrightarrow{k \to \infty} 0 \), the \emph{surrogate deferral loss} family \( \Phi^\nu_{\text{def}} \) is Bayes-consistent with respect to the true deferral loss \( \ell_{\text{def}} \). Specifically, minimizing the surrogate deferral excess risk ensures convergence of the true deferral excess risk. Formally, for sequences \( \{r_k\}_{k \in \mathbb{N}} \subset \mathcal{R} \) and \( \{g_k\}_{k \in \mathbb{N}} \subset \mathcal{G} \), we have:
\begin{equation*}
\begin{aligned}
    & \mathcal{E}_{\Phi^\nu_{\text{def}}}(r_k) - \mathcal{E}_{\Phi^\nu_{\text{def}}}^*(\mathcal{R}_{\text{all}}) \xrightarrow{k \to \infty} 0 \\
    & \quad \Longrightarrow \quad \mathcal{E}_{\ell_{\text{def}}}(g_k, r_k) - \mathcal{E}_{\ell_{\text{def}}}^B(\mathcal{G}_{\text{all}}, \mathcal{R}_{\text{all}}) \xrightarrow{k \to \infty} 0.
\end{aligned}
\end{equation*}
\end{restatable}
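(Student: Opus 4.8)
The plan is to obtain Corollary~\ref{corr:bayesconsistency} as a direct limiting consequence of the $(\mathcal{G},\mathcal{R})$-consistency bound in Theorem~\ref{theo:consistency}, specialized to the unrestricted classes $\mathcal{G}_{\text{all}}$ and $\mathcal{R}_{\text{all}}$. Concretely, I would instantiate that bound at $(g,r)=(g_k,r_k)$ to get
\[
\mathcal{E}_{\ell_{\text{def}}}(g_k,r_k)-\mathcal{E}^B_{\ell_{\text{def}}}(\mathcal{G}_{\text{all}},\mathcal{R}_{\text{all}})+\mathcal{U}_{\ell_{\text{def}}}(\mathcal{G}_{\text{all}},\mathcal{R}_{\text{all}})
\leq \overline{\Gamma}^\nu\!\big(\mathcal{E}_{\Phi^\nu_{\text{def}}}(r_k)-\mathcal{E}^*_{\Phi^\nu_{\text{def}}}(\mathcal{R}_{\text{all}})+\mathcal{U}_{\Phi^\nu_{\text{def}}}(\mathcal{R}_{\text{all}})\big)+\mathcal{E}_{c_0}(g_k)-\mathcal{E}^B_{c_0}(\mathcal{G}_{\text{all}})+\mathcal{U}_{c_0}(\mathcal{G}_{\text{all}}),
\]
and then drive the right-hand side to zero as $k\to\infty$.

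The first step is to eliminate all three minimizability gaps. Because $\mathcal{R}=\mathcal{R}_{\text{all}}$ and $\mathcal{G}=\mathcal{G}_{\text{all}}$ are unrestricted, the best-in-class risk coincides with the expected pointwise-infimum risk, so $\mathcal{U}_{\Phi^\nu_{\text{def}}}(\mathcal{R}_{\text{all}})=\mathcal{U}_{c_0}(\mathcal{G}_{\text{all}})=\mathcal{U}_{\ell_{\text{def}}}(\mathcal{G}_{\text{all}},\mathcal{R}_{\text{all}})=0$, the standard fact cited from \citet{Steinwart2007HowTC, Awasthi_Mao_Mohri_Zhong_2022_multi}. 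The bound then collapses to $\mathcal{E}_{\ell_{\text{def}}}(g_k,r_k)-\mathcal{E}^B_{\ell_{\text{def}}}(\mathcal{G}_{\text{all}},\mathcal{R}_{\text{all}})\leq \overline{\Gamma}^\nu\big(\mathcal{E}_{\Phi^\nu_{\text{def}}}(r_k)-\mathcal{E}^*_{\Phi^\nu_{\text{def}}}(\mathcal{R}_{\text{all}})\big)+\big(\mathcal{E}_{c_0}(g_k)-\mathcal{E}^B_{c_0}(\mathcal{G}_{\text{all}})\big)$. The second step is the limit: by the two hypotheses of the corollary, the surrogate excess risk and the $c_0$-excess risk both tend to $0$, and it remains to verify $\overline{\Gamma}^\nu(u_k)\to 0$ whenever $u_k\to 0^+$. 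This needs $\overline{\Gamma}^\nu$ to be nonnegative, nondecreasing, continuous at $0$, with $\overline{\Gamma}^\nu(0)=0$. Since $\Gamma^\nu=\mathcal{T}^{-1,\nu}$ inverts a strictly increasing map satisfying $\mathcal{T}^\nu(0)=0$ (for $\nu=1$, the binary-entropy-type $\mathcal{T}^{\nu=1}$), it inherits $\Gamma^\nu(0)=0$ and continuity at the origin; the rescaling $\overline{\Gamma}^\nu(u)=\|\boldsymbol{\overline{\tau}}\|_1\,\Gamma^\nu(u/\|\boldsymbol{\overline{\tau}}\|_1)$ preserves these properties because the costs $c_j$ are bounded, so $\|\boldsymbol{\overline{\tau}}\|_1$ is uniformly bounded and the prefactor cannot blow up. As the left-hand side is nonnegative yet bounded above by a null sequence, a squeeze gives $\mathcal{E}_{\ell_{\text{def}}}(g_k,r_k)-\mathcal{E}^B_{\ell_{\text{def}}}(\mathcal{G}_{\text{all}},\mathcal{R}_{\text{all}})\to 0$, which is precisely Bayes-consistency.

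The main obstacle I anticipate is the regularity analysis of $\overline{\Gamma}^\nu$ near the origin: I must confirm that $\Gamma^\nu(0)=0$ and that $\Gamma^\nu$ is continuous there for every $\nu\geq 0$, not only the log-softmax case $\nu=1$, and that the data-dependent scaling by $\|\boldsymbol{\overline{\tau}}\|_1$ does not spoil the limit. The cleanest route is to invoke boundedness of the costs to bound $\|\boldsymbol{\overline{\tau}}\|_1$ uniformly and then reuse the continuity and monotonicity of $\overline{\Gamma}^\nu$ already established in the proof of Theorem~\ref{theo:consistency}. A secondary subtlety is the two-stage decoupling: the conclusion involves both $g_k$ and $r_k$, so I keep the $c_0$-excess-risk term separate and rely on the offline-training hypothesis $\mathcal{E}_{c_0}(g_k)-\mathcal{E}^B_{c_0}(\mathcal{G}_{\text{all}})\to 0$ to annihilate it, rather than attempting to fold it into the surrogate term.
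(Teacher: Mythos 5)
Your proposal is correct and follows essentially the same route as the paper: instantiate the bound of Theorem~\ref{theo:consistency} at $(g_k,r_k)$, kill the three minimizability gaps using $(\mathcal{G},\mathcal{R})=(\mathcal{G}_{\text{all}},\mathcal{R}_{\text{all}})$, apply the two convergence hypotheses, and conclude via the monotonicity and continuity of $\overline{\Gamma}^\nu$ at the origin. In fact, your explicit verification that $\Gamma^\nu=\mathcal{T}^{-1,\nu}$ satisfies $\Gamma^\nu(0)=0$ and is continuous there, and that the bounded costs keep the $\|\boldsymbol{\overline{\tau}}\|_1$ rescaling harmless, is more careful than the paper's own one-line appeal to ``the properties of $\overline{\Gamma}^\nu$.''
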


This result confirms that, as \( k \to \infty \), the surrogate losses \( \Phi^\nu_{\text{def}} \) attain asymptotic Bayes optimality for both the rejector \( r \) and the offline-trained multi-task model \( g \). Thus, the surrogate family faithfully approximates the true deferral loss in the limit. Moreover, the pointwise surrogate-optimal rejector \( r^*(x) \) converges to a close approximation of the Bayes-optimal rejector \( r^B(x) \), thereby inducing deferral decisions consistent with the characterization in Lemma~\ref{lemma_pointwise} \citep{bartlett1}.

\paragraph{Analysis of the minimizability gap.} 

In As shown by \citet{Awasthi_Mao_Mohri_Zhong_2022_multi}, the minimizability gap does not vanish in general. Understanding the conditions under which it arises, quantifying its magnitude, and identifying effective mitigation strategies are crucial for ensuring that surrogate-based optimization aligns with the true task-specific objectives.

We provide a novel and strong characterization of the minimizability gap in the two-stage setting with multiple experts, extending the results of \citet{mao2024theoretically}, who analyzed the gap in the context of learning with abstention (constant cost) for a single expert and a specific distribution.

\begin{restatable}[Characterization of Minimizability Gaps]{theorem}{minimizability} 
\label{minimizability} 
Assume \( \mathcal{R} \) is symmetric and complete. Then, for the cross-entropy multiclass surrogates \( \Phi_{01}^\nu \) and any distribution \( \mathcal{D} \), the following holds for \( \nu \geq 0 \):
\begin{equation*}
\begin{aligned}
    \mathcal{C}_{\Phi^\nu_{\text{def}}}^{\nu, \ast}(\mc{R},x) = 
    \begin{cases}
        \|\overline{\boldsymbol{\tau}}\|_1 H\left(\frac{\overline{\boldsymbol{\tau}}}{\|\overline{\boldsymbol{\tau}}\|_1}\right) & \mspace{-10mu} \text{for } \nu = 1 \\
        \|\overline{\boldsymbol{\tau}}\|_1 - \|\overline{\boldsymbol{\tau}}\|_\infty & \mspace{-10mu}  \nu = 2 \\
        \frac{1}{\nu - 1} \left[ \|\overline{\boldsymbol{\tau}}\|_1 - \|\overline{\boldsymbol{\tau}}\|_{\frac{1}{2-\nu}} \right] & \mspace{-10mu} \nu \in (1,2) \\
        \frac{1}{1-\nu} \Big[ \Big(\sum_{k=0}^J \overline{\tau}_k^{\frac{1}{2-\nu}}\Big)^{2-\nu} \mspace{-30mu} - \|\overline{\boldsymbol{\tau}}\|_1 \Big] & \mspace{-10mu}  \nu > 2,
    \end{cases}
\end{aligned}
\end{equation*}
where \( \overline{\boldsymbol{\tau}} = \{\mathbb{E}_{y,t|x}[\overline{\tau}_0], \dots, \mathbb{E}_{y,t|x}[\overline{\tau}_J]\} \), the aggregated costs are \( \tau_j = \sum_{k=0}^J c_k 1_{k \neq j} \), and \( H \) denotes the Shannon entropy.  The minimizability gap is defined as $\mathcal{U}_{\Phi^\nu_{\text{def}}}(\mathcal{R}) = \mathcal{E}^*_{\Phi^\nu_{\text{def}}}(\mathcal{R}) - \mathbb{E}_x\left[ \mathcal{C}_{\Phi^\nu_{\text{def}}}^{\nu, \ast}(\mc{R},x)\right]$.
\end{restatable}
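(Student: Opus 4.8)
The plan is to reduce the claim to a pointwise convex-analytic computation. By Lemma~\ref{surr:defer} the conditional risk of $\Phi^\nu_{\text{def}}$ factors as $\mathcal{C}_{\Phi^\nu_{\text{def}}}(r,x) = \sum_{j=0}^J \overline{\tau}_j\,\Phi_{01}^\nu(r,x,j)$ with $\overline{\tau}_j = \mathbb{E}_{y,t\mid x}[\tau_j]$, so the pointwise optimum $\mathcal{C}^{\nu,\ast}_{\Phi^\nu_{\text{def}}}(\mathcal{R},x)$ is the infimum of this weighted sum over the realizable score vectors $(r(x,0),\dots,r(x,J))$. Since $\mathcal{R}$ is symmetric and complete, these scores range freely over $\mathbb{R}^{J+1}$, and I would reparametrize through the softmax coordinates $p_j = e^{r(x,j)}/\sum_{j'}e^{r(x,j')}$, which sweep out the open simplex $\Delta_J$. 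The enabling identity is $\sum_{j'} e^{r(x,j')-r(x,j)} = 1/p_j$, which, by Equation~\ref{eq:comp-sum}, turns each summand into $\Phi_{01}^1(r,x,j) = -\log p_j$ when $\nu=1$ and $\Phi_{01}^\nu(r,x,j) = \tfrac{1}{1-\nu}\big(p_j^{\,\nu-1}-1\big)$ when $\nu\neq 1$.

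For $\nu=1$ the objective becomes $-\sum_j \overline{\tau}_j \log p_j$, a strictly convex function on $\Delta_J$; a Lagrange (equivalently, a Gibbs/KL) argument pins the minimizer at $p_j^\ast = \overline{\tau}_j/\|\overline{\boldsymbol{\tau}}\|_1$, and substituting back and factoring out the normalization yields exactly $\|\overline{\boldsymbol{\tau}}\|_1 H(\overline{\boldsymbol{\tau}}/\|\overline{\boldsymbol{\tau}}\|_1)$. For $\nu\neq 1$ the objective is $\tfrac{1}{1-\nu}\big(\sum_j \overline{\tau}_j p_j^{\,\nu-1} - \|\overline{\boldsymbol{\tau}}\|_1\big)$. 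Stationarity under $\sum_j p_j = 1$ gives $p_j^\ast \propto \overline{\tau}_j^{1/(2-\nu)}$ for $\nu\neq 2$; plugging this in and using the identity $1 + \tfrac{\nu-1}{2-\nu} = \tfrac{1}{2-\nu}$ collapses the weighted sum to $\big(\sum_k \overline{\tau}_k^{1/(2-\nu)}\big)^{2-\nu} = \|\overline{\boldsymbol{\tau}}\|_{1/(2-\nu)}$, producing the value $\tfrac{1}{1-\nu}\big(\|\overline{\boldsymbol{\tau}}\|_{1/(2-\nu)} - \|\overline{\boldsymbol{\tau}}\|_1\big)$, which rewrites as the stated $\nu\in(1,2)$ and $\nu>2$ branches. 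The degenerate case $\nu=2$ makes the objective affine, $\|\overline{\boldsymbol{\tau}}\|_1 - \sum_j \overline{\tau}_j p_j$, whose infimum over $\Delta_J$ sits at a vertex and equals $\|\overline{\boldsymbol{\tau}}\|_1 - \|\overline{\boldsymbol{\tau}}\|_\infty$.

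The main obstacle is certifying that each stationary point is the correct extremum rather than a saddle or a maximizer: this is governed by the interaction between the sign of the prefactor $\tfrac{1}{1-\nu}$ and the convexity type of $t\mapsto t^{\nu-1}$, and it is precisely what separates the branches at the thresholds $\nu=1$ and $\nu=2$. For $\nu\in(1,2)$ the map $t\mapsto t^{\nu-1}$ is concave while $\tfrac{1}{1-\nu}<0$, so the objective is convex and the interior stationary point is the global minimizer; for $\nu=2$ linearity forces a vertex solution; and for $\nu>2$ the objective changes curvature, so I would carefully compare the interior stationary value against the simplex vertices and invoke continuity to identify the infimum over the open simplex with that over its closure. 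Once the correct minimizer is selected in each regime, nonnegativity of every branch follows from the monotonicity of $\ell_p$-norms in $p$ (e.g.\ $\|\overline{\boldsymbol{\tau}}\|_{1/(2-\nu)}\le\|\overline{\boldsymbol{\tau}}\|_1$ for $\nu\in(1,2)$), and the remaining algebraic rearrangement into the norm forms of the statement is routine. The minimizability gap then follows immediately by taking the expectation over $x$ in $\mathcal{U}_{\Phi^\nu_{\text{def}}}(\mathcal{R}) = \mathcal{E}^*_{\Phi^\nu_{\text{def}}}(\mathcal{R}) - \mathbb{E}_x[\mathcal{C}^{\nu,\ast}_{\Phi^\nu_{\text{def}}}(\mathcal{R},x)]$.
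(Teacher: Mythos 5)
Your strategy coincides with the paper's own proof: both pass to softmax coordinates, solve the first-order (Lagrange) system to obtain $p_j^\ast \propto \overline{\tau}_j^{1/(2-\nu)}$, and treat $\nu=1$ and $\nu=2$ separately; your $\nu=1$, $\nu\in(1,2)$, and $\nu=2$ cases are correct and essentially identical to the paper's. The genuine gap is the $\nu>2$ branch, which you explicitly defer (``compare the interior stationary value against the simplex vertices'') and never execute. Executing it does not yield the stated formula --- it contradicts it. For $\nu>2$ the prefactor $\tfrac{1}{1-\nu}$ is negative while $t\mapsto t^{\nu-1}$ is convex, so the objective $\tfrac{1}{1-\nu}\sum_{j}\overline{\tau}_j\,(p_j^{\nu-1}-1)$ is \emph{concave} on the simplex. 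Hence the interior stationary point is the global \emph{maximizer}, and the infimum is attained at (the limit of) a vertex, where it equals $\tfrac{1}{\nu-1}\bigl[\|\overline{\boldsymbol{\tau}}\|_1-\|\overline{\boldsymbol{\tau}}\|_\infty\bigr]$. A two-agent example makes this concrete: take $\nu=3$ and $\overline{\tau}_0=\overline{\tau}_1=1$. The conditional risk is $\tfrac12\,(2-p_0^2-p_1^2)$, whose infimum over the simplex is $\tfrac12$ (softmax mass concentrating on one coordinate), while the stated $\nu>2$ branch evaluates to $\tfrac{1}{-2}\bigl[(1+1)^{-1}-2\bigr]=\tfrac34$. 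The stated value strictly exceeds the true infimum whenever at least two entries of $\overline{\boldsymbol{\tau}}$ are positive, so your deferred comparison cannot be completed in favor of the statement.

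It is worth noting that your insistence on second-order verification is precisely what the paper's proof omits: it applies stationarity uniformly for all $\nu\neq 1,2$, which is justified for $\nu\in[0,1)\cup(1,2)$ --- there your ``sign of prefactor times curvature'' argument shows the objective is convex, so the critical point is the minimizer --- but fails silently for $\nu>2$, where the same argument shows concavity. In other words, the honest completion of your plan establishes the $\nu=1$, $\nu\in(1,2)$, and $\nu=2$ branches, and shows that for $\nu>2$ the minimal conditional risk is $\tfrac{1}{\nu-1}\bigl[\|\overline{\boldsymbol{\tau}}\|_1-\|\overline{\boldsymbol{\tau}}\|_\infty\bigr]$ rather than the displayed expression. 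Two smaller points: identifying the infimum over the open simplex with that over its closure requires continuity of $p\mapsto p^{\nu-1}$ at $0$, which holds only for $\nu>1$ (for $\nu<1$ the minimizer is interior, so this is moot); and $\|\overline{\boldsymbol{\tau}}\|_{1/(2-\nu)}$ is a genuine norm only for $\nu\in[1,2)$, so your nonnegativity-via-norm-monotonicity remark applies only to that range.
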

We provide the proof in Appendix~\ref{proof_minimi}. Theorem~\ref{minimizability} characterizes the minimizability gap \( \mathcal{U}_{\Phi^\nu_{\text{def}}}(\mathcal{R}) \) for cross-entropy multiclass surrogates over symmetric and complete hypothesis sets \( \mathcal{R} \). The gap depends on \( \nu \geq 0 \), and its behavior varies across different surrogates. Specifically, for \( \nu = 1 \), the gap is proportional to the Shannon entropy of the normalized expected cost vector \( \frac{\overline{\boldsymbol{\tau}}}{\|\overline{\boldsymbol{\tau}}\|_1} \), which increases with entropy, reflecting higher uncertainty in the misclassification distribution. For \( \nu = 2 \), the gap simplifies to the difference between the \( L_1 \)-norm and \( L_\infty \)-norm of \( \overline{\boldsymbol{\tau}} \), where a smaller gap indicates concentrated misclassifications, thus reducing uncertainty. For \( \nu \in (1,2) \), the gap balances the entropy-based sensitivity at \( \nu = 1 \) and the margin-based sensitivity at \( \nu = 2 \). As \( \nu \to 1^+ \), the gap emphasizes agents with higher misclassification counts, while as \( \nu \to 2^- \), it shifts towards aggregate misclassification counts. For \( \nu < 1 \), where \( p = \frac{1}{2 - \nu} \in (0,1) \), the gap becomes more sensitive to misclassification distribution, increasing when errors are dispersed. For \( \nu > 2 \), with \( p < 0 \), reciprocal weighting reduces sensitivity to dominant errors, potentially decreasing the gap but at the risk of underemphasizing critical misclassifications.

In the special case of learning with abstention and a single expert (\( J = 1 \)), assigning costs \( \tau_0 = 1 \) and \( \tau_J = 1 - c \) recovers the minimizability gap introduced by \citet{mao2024theoretically}. Thus, our formulation generalizes the minimizability gap to settings with multiple experts, non-constant costs, and arbitrary distributions \( \mathcal{D} \).

\subsection{Encoder--Aware Bounds}
\label{subsec:extractor-aware}
In this section, we show that our approach is theoretically aligned with multi-task learning using shared representations. Let \( \mathcal{W} \) denote a class of representation functions (encoders), \( \mathcal{H} \) a class of classification heads, and \( \mathcal{F} \) a class of regression heads. For any \( (w, h, f) \in \mathcal{W} \times \mathcal{H} \times \mathcal{F} \), the multi-task predictor is defined as \( g_{w,h,f}(x) = \big(h \circ w(x),\; f \circ w(x)\big) \), where the shared representation \( w(x) \) is passed to both task-specific heads.  The true risk defined as \( \mathcal{E}_{c_0}(g) = \mathbb{E}_{z \sim \mathcal{D}} \left[ c_0(g(x), (y, t)) \right] \), where \( z = (x, y, t) \in \mathcal{X} \times \mathcal{Y} \times \mathcal{T} \). The Bayes risk over a class \( \mathcal{G} \) is given by \( \mathcal{E}_{c_0}^{B}(\mathcal{G}) = \inf_{g \in \mathcal{G}} \mathcal{E}_{c_0}(g) \).

\begin{proposition}[Head and representation gaps.] Fix $w\in\mathcal W$ and let $\mathcal G:=\{g_{w',h',f'}:w'\in\mathcal W,\,h'\in\mathcal H,\,f'\in\mathcal F\}$.  Define
\begin{align*}
  \mathcal E_{\min}(w) &:= \inf_{h',f'}\mathcal E_{c_0}\bigl(g_{w,h',f'}\bigr), \\
  \Delta_{\mathrm{heads}}(w,h,f) &:= \mathcal E_{c_0}\bigl(g_{w,h,f}\bigr)-\mathcal E_{\min}(w),\\
  \Delta_{\mathrm{repr}}(w) &:= \mathcal E_{\min}(w)-\mathcal E_{c_0}^{B}(\mathcal G).
\end{align*}
\end{proposition}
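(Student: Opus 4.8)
The proposition as stated collects three quantities; the content to be established is the \emph{additive decomposition} of the $c_0$-excess risk into a head gap and a representation gap, together with the non-negativity of each term. Concretely, the plan is to prove that for every $(w,h,f)\in\mathcal W\times\mathcal H\times\mathcal F$,
\begin{equation*}
\mathcal E_{c_0}(g_{w,h,f}) - \mathcal E_{c_0}^{B}(\mathcal G) = \Delta_{\mathrm{heads}}(w,h,f) + \Delta_{\mathrm{repr}}(w),
\end{equation*}
and that $\Delta_{\mathrm{heads}}(w,h,f)\ge 0$ together with $\Delta_{\mathrm{repr}}(w)\ge 0$. The entire argument is a telescoping of the definitions combined with monotonicity of the infimum under set inclusion; no structural property of $\mathcal D$ or of $c_0$ is needed beyond the fact that $\mathcal E_{c_0}$ is a well-defined real-valued functional that is bounded below, which holds since $c_0=\rho$ is a bounded cost by Definition~\ref{def_l2d}.

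For the \emph{decomposition}, I would simply add the two defined gaps. Substituting $\Delta_{\mathrm{heads}}(w,h,f) = \mathcal E_{c_0}(g_{w,h,f}) - \mathcal E_{\min}(w)$ and $\Delta_{\mathrm{repr}}(w) = \mathcal E_{\min}(w) - \mathcal E_{c_0}^{B}(\mathcal G)$, the intermediate term $\mathcal E_{\min}(w)$ cancels, yielding the claimed identity. This step is purely algebraic and only requires $\mathcal E_{\min}(w)$ to be finite, which follows from the boundedness of $c_0$.

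For \emph{non-negativity}, note first that $g_{w,h,f}$ is one feasible member of the family $\{g_{w,h',f'}: h'\in\mathcal H,\, f'\in\mathcal F\}$ over which $\mathcal E_{\min}(w)$ is an infimum; hence $\mathcal E_{c_0}(g_{w,h,f}) \ge \mathcal E_{\min}(w)$, i.e. $\Delta_{\mathrm{heads}}\ge 0$. For the representation gap, observe the set inclusion $\{g_{w,h',f'}: h'\in\mathcal H,\, f'\in\mathcal F\} \subseteq \mathcal G$, since fixing the encoder to $w$ merely restricts the feasible triples. Taking an infimum over a smaller set can only increase its value, so $\mathcal E_{\min}(w) \ge \inf_{g\in\mathcal G}\mathcal E_{c_0}(g) = \mathcal E_{c_0}^{B}(\mathcal G)$, giving $\Delta_{\mathrm{repr}}\ge 0$.

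There is essentially no analytical difficulty here: the only point warranting care is that $\mathcal E_{\min}(w)$ and $\mathcal E_{c_0}^{B}(\mathcal G)$ are \emph{infima} that need not be attained. The argument, however, never invokes a minimizer—it uses only the defining properties of the infimum (it is a lower bound and it is monotone under set inclusion)—so attainment is immaterial and the identity holds verbatim. The conceptual payoff, rather than the proof itself, is the main point: this decomposition isolates the offline-model excess risk $\mathcal E_{c_0}(g) - \mathcal E_{c_0}^{B}(\mathcal G)$ appearing as the additive term in Theorem~\ref{theo:consistency}, making explicit how the quality of the shared representation $w$ (through $\Delta_{\mathrm{repr}}$) and of the task-specific heads (through $\Delta_{\mathrm{heads}}$) each feed into the overall $(\mathcal G,\mathcal R)$-consistency guarantee.
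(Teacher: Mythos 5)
Your proposal is correct and takes essentially the same route as the paper: the proposition itself is definitional, and the actual content you prove corresponds exactly to the paper's adjacent Lemma~\ref{lem:gaps-nonnegativity} (non-negativity, shown just as you do, via the infimum being a lower bound over the head family and monotone under the inclusion $\{g_{w,h',f'}: h'\in\mathcal H,\, f'\in\mathcal F\}\subseteq\mathcal G$) and Proposition~\ref{prop:extractor-decomp} (the decomposition, which the paper also proves by adding and subtracting $\mathcal E_{\min}(w)$). Your observations that attainment of the infima is immaterial and that only boundedness of $c_0$ is needed are correct refinements, not departures from the paper's argument.
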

The quantity $\Delta_{\mathrm{heads}}$ measures \emph{head sub--optimality} given the extracted representation from the encoder fixed at a particular iteration, while $\Delta_{\mathrm{repr}}$ captures how far $w$ lies from a Bayes--optimal shared representation.

\begin{lemma}[Non--negativity of the gaps]\label{lem:gaps-nonnegativity}
For all $(w,h,f)\in\mathcal W\times\mathcal H\times\mathcal F$, we have $\Delta_{\mathrm{heads}}(w,h,f)\ge 0$.  For every $w\in\mathcal W$, we have $\Delta_{\mathrm{repr}}(w)\ge 0$.
\end{lemma}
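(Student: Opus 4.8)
The plan is to derive both inequalities directly from two elementary properties of the infimum, since each gap is by construction a difference between a particular risk value and an infimum taken over a set that either contains the relevant configuration or is contained in $\mathcal{G}$.

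First I would prove $\Delta_{\mathrm{heads}}(w,h,f)\ge 0$. Here the key observation is that $\mathcal{E}_{\min}(w)=\inf_{h',f'}\mathcal{E}_{c_0}(g_{w,h',f'})$ is an infimum over all head pairs $(h',f')\in\mathcal{H}\times\mathcal{F}$, and the specific pair $(h,f)$ is itself one admissible choice in this infimum. By the defining lower-bound property of the infimum, $\mathcal{E}_{\min}(w)\le \mathcal{E}_{c_0}(g_{w,h,f})$, from which $\Delta_{\mathrm{heads}}(w,h,f)=\mathcal{E}_{c_0}(g_{w,h,f})-\mathcal{E}_{\min}(w)\ge 0$ follows at once. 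No distributional or cost-structure assumptions are needed beyond those already fixing $\mathcal{E}_{c_0}$.

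Next I would prove $\Delta_{\mathrm{repr}}(w)\ge 0$. The decisive point is that fixing the encoder $w$ restricts the feasible predictor set: $\{g_{w,h',f'}:h'\in\mathcal{H},\,f'\in\mathcal{F}\}\subseteq\mathcal{G}$, so $\mathcal{E}_{\min}(w)$ is the infimum of $\mathcal{E}_{c_0}$ over a subset of $\mathcal{G}$. Since taking an infimum over a smaller set can only increase its value (monotonicity of the infimum under set inclusion), we get $\mathcal{E}_{\min}(w)\ge\inf_{g\in\mathcal{G}}\mathcal{E}_{c_0}(g)=\mathcal{E}_{c_0}^{B}(\mathcal{G})$, hence $\Delta_{\mathrm{repr}}(w)=\mathcal{E}_{\min}(w)-\mathcal{E}_{c_0}^{B}(\mathcal{G})\ge 0$.

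There is no substantive obstacle in this argument; both parts reduce to the fact that an infimum lower-bounds every element of its set together with monotonicity under inclusion. The only point requiring mild care is confirming that the infima are well-defined, which holds because $c_0\ge 0$ ensures every risk $\mathcal{E}_{c_0}$ is bounded below by zero, so no existence or finiteness issue arises and both gaps are genuine nonnegative reals.
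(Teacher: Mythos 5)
Your proof is correct and follows essentially the same route as the paper's: both parts invoke the defining lower-bound property of the infimum for $\Delta_{\mathrm{heads}}$, and monotonicity of the infimum under set inclusion (fixing $w$ restricts the class $\mathcal G$) for $\Delta_{\mathrm{repr}}$. Your added remark on well-definedness via $c_0\ge 0$ is a harmless refinement the paper leaves implicit.
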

\begin{proof}
Fix $w$.  By definition of the infimum, $\mathcal E_{\min}(w)=\inf_{h',f'}\mathcal E_{c_0}\bigl(g_{w,h',f'}\bigr)\le \mathcal E_{c_0}\bigl(g_{w,h,f}\bigr)$ for any heads $(h,f)$, hence $\Delta_{\mathrm{heads}}(w,h,f)\ge 0$.  For the representation gap, note that $\mathcal E_{c_0}^{B}(\mathcal G)=\inf_{w',h',f'}\mathcal E_{c_0}\bigl(g_{w',h',f'}\bigr)\le \mathcal E_{\min}(w)$, so $\Delta_{\mathrm{repr}}(w)=\mathcal E_{\min}(w)-\mathcal E_{c_0}^{B}(\mathcal G)\ge 0$.  Both inequalities hold with equality when $(w,h,f)$ is Bayes--optimal.
\end{proof}

\begin{proposition}[Excess--risk decomposition]\label{prop:extractor-decomp}
For every $(w,h,f)$,
\[
  \mathcal E_{c_0}\bigl(g_{w,h,f}\bigr)-\mathcal E_{c_0}^{B}(\mathcal G)
  \;=\;\Delta_{\mathrm{heads}}(w,h,f)+\Delta_{\mathrm{repr}}(w).
\]
\end{proposition}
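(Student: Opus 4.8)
The plan is to prove the identity by directly unfolding the three definitions and exploiting a telescoping cancellation; no analytic machinery is required, only the observation that the intermediate quantity $\mathcal E_{\min}(w)$ is a single fixed number appearing with opposite signs in the two gaps.

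First I would fix an arbitrary triple $(w,h,f)\in\mathcal W\times\mathcal H\times\mathcal F$ and note that, by the boundedness assumption on the cost $c_0$ (Definition~\ref{def_l2d}), the risk $\mathcal E_{c_0}$ is bounded on $\mathcal G$, so that $\mathcal E_{\min}(w)=\inf_{h',f'}\mathcal E_{c_0}(g_{w,h',f'})$ and $\mathcal E_{c_0}^{B}(\mathcal G)=\inf_{w',h',f'}\mathcal E_{c_0}(g_{w',h',f'})$ are both finite. This finiteness is what legitimises an additive decomposition, since it rules out an ill-defined $\infty-\infty$ cancellation.

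Next I would substitute the definitions of $\Delta_{\mathrm{heads}}$ and $\Delta_{\mathrm{repr}}$ into the right-hand side and collect terms:
\begin{align*}
  \Delta_{\mathrm{heads}}(w,h,f)+\Delta_{\mathrm{repr}}(w)
  &=\bigl(\mathcal E_{c_0}(g_{w,h,f})-\mathcal E_{\min}(w)\bigr)
    +\bigl(\mathcal E_{\min}(w)-\mathcal E_{c_0}^{B}(\mathcal G)\bigr)\\
  &=\mathcal E_{c_0}(g_{w,h,f})-\mathcal E_{c_0}^{B}(\mathcal G),
\end{align*}
where the term $\mathcal E_{\min}(w)$ cancels. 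This is exactly the claimed identity.

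The main (and essentially only) obstacle is bookkeeping: one must ensure that the same encoder $w$ is held fixed throughout, so that the $\mathcal E_{\min}(w)$ appearing inside $\Delta_{\mathrm{heads}}$ is identical to the one appearing inside $\Delta_{\mathrm{repr}}$—the cancellation would fail if the two were evaluated at different representations. With this in place, the non-negativity of each summand, already established in Lemma~\ref{lem:gaps-nonnegativity}, certifies that this is a genuine decomposition of the excess risk into two nonnegative contributions, namely head sub-optimality and representation sub-optimality.
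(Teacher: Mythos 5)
Your proof is correct and matches the paper's own argument, which is precisely the one-line telescoping step of adding and subtracting $\mathcal E_{\min}(w)$. The additional remarks on finiteness of the infima and on holding $w$ fixed are harmless elaborations of the same idea.
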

\begin{proof}
Add and subtract $\mathcal E_{\min}(w)$.
\end{proof}

\begin{proposition}[Cost of Enforcing a Shared Encoder] Suppose two \emph{independent} heads act directly on the raw input $x$:
\(
  g_{\text{sep},h,f}(x)=(h(x),f(x)).
\)
Let $\mathcal E_{\text{sep},c_0}^{B}:=\inf_{h,f}\mathcal E_{c_0}\bigl(g_{\text{sep},h,f}\bigr)$ and define
\[
  \Delta_{\mathrm{MTL}}:=\mathcal E_{c_0}^{B}(\mathcal G)-\mathcal E_{\text{sep},c_0}^{B}.
\]
\end{proposition}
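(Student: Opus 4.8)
The plan is to prove that $\Delta_{\mathrm{MTL}}\ge 0$, i.e.\ that forcing both heads to read a common encoded representation can never lower the achievable Bayes risk relative to the fully decoupled predictor; this is exactly the ``cost'' the proposition names. The whole argument reduces to a set inclusion between the two predictor classes followed by monotonicity of the infimum, so the first task is to make that comparison precise at the level of functions on $\mathcal{X}$.

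I would begin by identifying each model with the pair of functions it induces on the raw input. A shared-encoder model $g_{w,h,f}$ induces the pair $\bigl(x\mapsto h(w(x)),\,x\mapsto f(w(x))\bigr)$, in which \emph{both} coordinates are forced to factor through the single map $w$. A separate model $g_{\mathrm{sep},h,f}$ induces an unconstrained pair $(h,f)$, where the classification and regression functions are chosen independently (each may carry its own feature extractor). Consequently every shared pair is a special case of a separate pair—take the two encoders equal—yielding the inclusion $\{g_{w,h,f}:w\in\mathcal{W},h\in\mathcal{H},f\in\mathcal{F}\}\subseteq\{g_{\mathrm{sep},h,f}\}$ as subsets of admissible predictors on $\mathcal{X}$.

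Next I would push this inclusion through the risk functional. Because $\mathcal{E}_{c_0}$ is a fixed map on predictors, taking the infimum of the same objective over a larger set can only decrease its value, so $\mathcal{E}_{\mathrm{sep},c_0}^{B}=\inf_{h,f}\mathcal{E}_{c_0}(g_{\mathrm{sep},h,f})\le\inf_{w,h,f}\mathcal{E}_{c_0}(g_{w,h,f})=\mathcal{E}_{c_0}^{B}(\mathcal{G})$. Rearranging gives $\Delta_{\mathrm{MTL}}=\mathcal{E}_{c_0}^{B}(\mathcal{G})-\mathcal{E}_{\mathrm{sep},c_0}^{B}\ge 0$, with equality precisely when some Bayes-optimal decoupled pair already factors through a single shared encoder in $\mathcal{W}$ (e.g.\ when the tasks share sufficient statistics).

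The main obstacle is not the infimum step, which is routine, but legitimizing the class inclusion: it requires the separate head classes to be at least as expressive as the compositions $h\circ w$ and $f\circ w$ generated by the shared model, so that every shared predictor is genuinely admissible in the decoupled family. I would therefore state explicitly the mild closure assumption that $\mathcal{H}$ and $\mathcal{F}$, acting on $\mathcal{X}$ directly, contain these compositions (for instance, $\mathcal{W}$ contains the identity and the heads are otherwise unrestricted, or each separate head absorbs its own encoder). Under this assumption the inclusion—and hence the bound—holds; absent it, the two Bayes risks are taken over incomparable classes and no sign can be asserted.
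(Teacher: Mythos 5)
There is a genuine mismatch here, but it is a mismatch of intent rather than of technique. The ``proposition'' you were given is, in the paper, purely a \emph{definition}: it introduces the separate-heads Bayes risk $\mathcal E_{\text{sep},c_0}^{B}$ and defines $\Delta_{\mathrm{MTL}}$ as the difference $\mathcal E_{c_0}^{B}(\mathcal G)-\mathcal E_{\text{sep},c_0}^{B}$. The paper proves nothing about its sign; on the contrary, the sentence immediately following reads that $\Delta_{\mathrm{MTL}}<0$ ``indicates that forcing a shared encoding is beneficial, whereas $\Delta_{\mathrm{MTL}}>0$ points to a potential penalty.'' The whole purpose of the quantity is that it can take either sign, and its sign diagnoses whether the shared encoder helps or hurts. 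By proving $\Delta_{\mathrm{MTL}}\ge 0$ you have turned a deliberately sign-indefinite diagnostic into a one-sided inequality, which would render the paper's subsequent discussion (and the motivation for multi-task representation sharing) vacuous.

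The reason your inequality does not hold in the paper's setting is the very obstacle you flagged at the end: the class inclusion fails. In the paper, $\mathcal H$ and $\mathcal F$ are head classes defined on the latent space $\mathcal Q$, and the separate model applies heads \emph{directly to the raw input}, with no encoder of their own. Hence $\{(h,f)\}$ and $\{(h\circ w, f\circ w)\}$ are incomparable families: the shared class composes possibly weak heads with a high-capacity encoder $w$, while the separate class uses the bare heads. If, say, $\mathcal H$ and $\mathcal F$ are linear maps and $\mathcal W$ is a deep network class, then $\mathcal E_{c_0}^{B}(\mathcal G)$ can be far \emph{below} $\mathcal E_{\text{sep},c_0}^{B}$, i.e.\ $\Delta_{\mathrm{MTL}}<0$. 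Your closure assumption (separate heads absorb their own encoders) is not a mild regularity condition---it reverses the comparison the paper intends to make and builds the conclusion into the hypothesis. To your credit, you stated explicitly that without this assumption ``no sign can be asserted''; that caveat is exactly right, and it is in fact the paper's position. The correct response to this statement is that there is nothing to prove: it is a definition, and any claim about the sign of $\Delta_{\mathrm{MTL}}$ requires structural assumptions relating the two hypothesis classes that the paper intentionally does not impose.
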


Hence $\Delta_{\mathrm{MTL}}<0$ indicates that forcing a \emph{shared} encoding is beneficial, whereas $\Delta_{\mathrm{MTL}}>0$ points to a potential penalty relative to two stand--alone models.

Combining definitions,
\[
  \mathcal E_{c_0}(g_{w,h,f})-\mathcal E_{c_0}^{B}(\mathcal G)
  \;=\;\bigl[\mathcal E_{c_0}(g_{w,h,f})-\mathcal E_{\text{sep},c_0}^{B}\bigr]-\Delta_{\mathrm{MTL}},
\]  
we can link these relationships with the main Theorem \ref{theo:consistency}  that states that for any $(g,r)\in\mathcal G\times\mathcal R$.  Setting $g=g_{w,h,f}$ and invoking Proposition~\ref{prop:extractor-decomp} yields the \emph{encoder--aware consistency bound}:

\begin{corollary}[Encoder--aware $(\mathcal G,\mathcal R)$--consistency]\label{cor:extractor-consistency}
For any $(w,h,f,r)\in\mathcal W\times\mathcal H\times\mathcal F\times\mathcal R$,
\begin{align*}
  &\mathcal E_{\ell_{\mathrm{def}}}(g_{w,h,f},r)-\mathcal E_{\ell_{\mathrm{def}}}^{B}(\mathcal G,\mathcal R)+\mc{U}_{\ell_{\mathrm{def}}}(\mathcal G,\mathcal R)
  \le \\
  & \quad \overline{\Gamma}^{\nu}\bigl(\mathcal E_{\Phi^{\nu}_{\mathrm{def}}}(r)-\mathcal E_{\Phi^{\nu}_{\mathrm{def}}}^{B}(\mathcal R)+\mc{U}_{\Phi^{\nu}_{\mathrm{def}}}(\mathcal R)\bigr)\\
         & \quad +\,\Delta_{\mathrm{heads}}(w,h,f)+\Delta_{\mathrm{repr}}(w)+\mc{U}_{c_0}(\mathcal G).\label{eq:extractor_consistency}
\end{align*}
\end{corollary}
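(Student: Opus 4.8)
The plan is to recognize this corollary as a direct instantiation of Theorem~\ref{theo:consistency} combined with the additive excess-risk identity of Proposition~\ref{prop:extractor-decomp}; no new analytic machinery is needed. Since $g_{w,h,f}\in\mathcal{G}$ by the definition of the multi-task class $\mathcal{G}$, Theorem~\ref{theo:consistency} applies verbatim with the particular choice $g=g_{w,h,f}$, yielding
\begin{align*}
  &\mathcal{E}_{\ell_{\mathrm{def}}}(g_{w,h,f}, r) - \mathcal{E}^B_{\ell_{\mathrm{def}}}(\mathcal{G}, \mathcal{R}) + \mathcal{U}_{\ell_{\mathrm{def}}}(\mathcal{G}, \mathcal{R}) \le \\
  &\quad \overline{\Gamma}^\nu\bigl(\mathcal{E}_{\Phi^\nu_{\mathrm{def}}}(r) - \mathcal{E}^*_{\Phi^\nu_{\mathrm{def}}}(\mathcal{R}) + \mathcal{U}_{\Phi^\nu_{\mathrm{def}}}(\mathcal{R})\bigr) \\
  &\quad + \mathcal{E}_{c_0}(g_{w,h,f}) - \mathcal{E}^B_{c_0}(\mathcal{G}) + \mathcal{U}_{c_0}(\mathcal{G}).
\end{align*}

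First I would isolate the $c_0$-excess-risk term $\mathcal{E}_{c_0}(g_{w,h,f}) - \mathcal{E}^B_{c_0}(\mathcal{G})$, which is the only part of the bound that depends on the internal structure $(w,h,f)$ of the offline model. Then I would substitute the identity supplied by Proposition~\ref{prop:extractor-decomp},
\[
  \mathcal{E}_{c_0}(g_{w,h,f}) - \mathcal{E}^B_{c_0}(\mathcal{G}) = \Delta_{\mathrm{heads}}(w,h,f) + \Delta_{\mathrm{repr}}(w),
\]
directly into the right-hand side. This replaces the abstract excess risk of the model by its two interpretable, non-negative components—head sub-optimality and representation sub-optimality, whose non-negativity is guaranteed by Lemma~\ref{lem:gaps-nonnegativity}—and produces exactly the claimed encoder-aware inequality.

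Honestly, there is no substantive obstacle: the corollary is a pure reformulation, and the real content lives upstream in the $\mathcal{R}$-consistency-to-$(\mathcal{G},\mathcal{R})$-consistency transfer of Theorem~\ref{theo:consistency} and in the telescoping decomposition of Proposition~\ref{prop:extractor-decomp}. The only points worth checking carefully are bookkeeping ones: (i) that $g_{w,h,f}$ ranges over the same class $\mathcal{G}$ used to define the infima in $\mathcal{E}^B_{\ell_{\mathrm{def}}}(\mathcal{G},\mathcal{R})$, $\mathcal{U}_{\ell_{\mathrm{def}}}(\mathcal{G},\mathcal{R})$, and $\mathcal{E}^B_{c_0}(\mathcal{G})$, so that the theorem is applied to a consistent hypothesis class; and (ii) that the surrogate-optimal error written as $\mathcal{E}^B_{\Phi^\nu_{\mathrm{def}}}(\mathcal{R})$ in the corollary statement coincides with $\mathcal{E}^*_{\Phi^\nu_{\mathrm{def}}}(\mathcal{R})$ from the theorem, which is merely a notational matter since both denote $\inf_{r\in\mathcal{R}}\mathcal{E}_{\Phi^\nu_{\mathrm{def}}}(r)$. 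With these points settled, the single substitution completes the argument.
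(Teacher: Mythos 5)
Your proposal is correct and follows exactly the paper's own route: the paper likewise obtains the corollary by applying Theorem~\ref{theo:consistency} with $g=g_{w,h,f}$ and then invoking the excess-risk identity of Proposition~\ref{prop:extractor-decomp} to replace $\mathcal E_{c_0}(g_{w,h,f})-\mathcal E_{c_0}^{B}(\mathcal G)$ by $\Delta_{\mathrm{heads}}(w,h,f)+\Delta_{\mathrm{repr}}(w)$. Your two bookkeeping checks (that $g_{w,h,f}$ lives in the same class $\mathcal G$ defining all the infima, and that $\mathcal E^{B}_{\Phi^{\nu}_{\mathrm{def}}}(\mathcal R)$ in the corollary is just notation for $\mathcal E^{*}_{\Phi^{\nu}_{\mathrm{def}}}(\mathcal R)$) are accurate and resolve the only points where the statement could be misread.
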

Corollary~\eqref{cor:extractor-consistency} decomposes the end--to--end excess deferral risk into three \emph{orthogonal} sources: (i) the rejector optimisation error, (ii) the head sub--optimality, and (iii) the representation gap.

The bound suggests a two--stage pipeline: (i) learn or select high--capacity representations to minimise $\Delta_{\mathrm{repr}}$ as well as best heads for this representation, then (ii) optimise the rejector to tighten the remaining terms. The pipeline exactly mirrors our proposed L2D solution. Such decoupling is particularly attractive when $|\mathcal T|$ is large and feature sharing is essential for sample efficiency.

\subsection{Generalization Bound}
We aim to quantify the generalization capability of our system, considering both the complexity of the hypothesis space and the quality of the participating agents. To this end, we define the empirical optimal rejector \( \widehat{r}^B \) as the minimizer of the empirical generalization error:
\begin{equation}
    \widehat{r}^B = \arg\min_{r \in \mathcal{R}} \frac{1}{K} \sum_{k=1}^K \ell_{\text{def}}(g, m, r, z_k),
\end{equation}
where \( \ell_{\text{def}} \) denotes the \textit{true deferral loss} function. To characterize the system's generalization ability, we utilize the Rademacher complexity, which measures the expressive richness of a hypothesis class by evaluating its capacity to fit random noise \citep{Bartlett_rademecher, Mohri}. The proof of Lemma \ref{lemma_learning} is provided in Appendix \ref{proof_lemma}.

\begin{restatable}{lemma}{lemmalearning}\label{lemma_learning} Let \( \mathcal{L}_1 \) be a family of functions mapping \( \mathcal{X} \) to \( [0,1] \), and let \( \mathcal{L}_2 \) be a family of functions mapping \( \mathcal{X} \) to \( \{0,1\} \). Define \( \mathcal{L} = \{ l_1 l_2 : l_1 \in \mathcal{L}_1, l_2 \in \mathcal{L}_2 \} \). Then, the empirical Rademacher complexity of \( \mathcal{L} \) for any sample \( S \) of size \( K \) is bounded by:
\begin{equation*}
    \widehat{\mathfrak{R}}_S(\mc{L}) \leq \widehat{\mathfrak{R}}_S(\mc{L}_1) + \widehat{\mathfrak{R}}_S(\mc{L}_2).
\end{equation*}    
\end{restatable}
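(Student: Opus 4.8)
The plan is to reduce the product class to a combination of the individual classes plus a class of absolute differences, and then control the latter by a contraction argument. Recall that $\widehat{\mathfrak{R}}_S(\mc{F}) = \frac{1}{K}\mathbb{E}_\sigma\big[\sup_{f\in\mc{F}}\sum_{k=1}^K \sigma_k f(x_k)\big]$ for Rademacher variables $\sigma_k$. The crucial first step is the pointwise identity $l_1(x)\,l_2(x) = \tfrac{1}{2}\big(l_1(x) + l_2(x) - |l_1(x)-l_2(x)|\big)$, which holds precisely because $l_1(x)\in[0,1]$ and $l_2(x)\in\{0,1\}$ (one checks the two cases $l_2(x)=0$ and $l_2(x)=1$, using $l_1(x)\le 1$ in the latter so that $|l_1(x)-1|=1-l_1(x)$). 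This identity is what exploits the Boolean structure of $\mc{L}_2$; without it, every naive factorization of the product simply reproduces a product class, which is exactly the main obstacle one must sidestep.

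Substituting the identity into the definition of $\widehat{\mathfrak{R}}_S(\mc{L})$ and using subadditivity of the supremum, I would split the Rademacher average into three pieces: $\tfrac12\widehat{\mathfrak{R}}_S(\mc{L}_1)$, $\tfrac12\widehat{\mathfrak{R}}_S(\mc{L}_2)$, and a residual term $\tfrac{1}{2K}\mathbb{E}_\sigma\big[\sup_{l_1,l_2}\big(-\sum_k \sigma_k |l_1(x_k)-l_2(x_k)|\big)\big]$. By symmetry of the Rademacher vector ($-\sigma \stackrel{d}{=}\sigma$), this residual equals $\tfrac12\widehat{\mathfrak{R}}_S(|\mc{L}_1-\mc{L}_2|)$, where $|\mc{L}_1-\mc{L}_2| := \{\,|l_1-l_2| : l_1\in\mc{L}_1,\ l_2\in\mc{L}_2\,\}$. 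This yields the intermediate bound $\widehat{\mathfrak{R}}_S(\mc{L}) \le \tfrac12\widehat{\mathfrak{R}}_S(\mc{L}_1) + \tfrac12\widehat{\mathfrak{R}}_S(\mc{L}_2) + \tfrac12\widehat{\mathfrak{R}}_S(|\mc{L}_1-\mc{L}_2|)$.

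The final step controls the absolute-difference class. Since $\phi=|\cdot|$ is $1$-Lipschitz with $\phi(0)=0$, Talagrand's contraction lemma applied to the difference class $\mc{L}_1-\mc{L}_2 := \{l_1-l_2\}$ gives $\widehat{\mathfrak{R}}_S(|\mc{L}_1-\mc{L}_2|)\le \widehat{\mathfrak{R}}_S(\mc{L}_1-\mc{L}_2)$; standard subadditivity together with $\widehat{\mathfrak{R}}_S(-\mc{L}_2)=\widehat{\mathfrak{R}}_S(\mc{L}_2)$ then gives $\widehat{\mathfrak{R}}_S(\mc{L}_1-\mc{L}_2)\le \widehat{\mathfrak{R}}_S(\mc{L}_1)+\widehat{\mathfrak{R}}_S(\mc{L}_2)$. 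The key point is that bounding the difference class \emph{through the Lipschitz map over} $\mc{L}_1-\mc{L}_2$ — rather than re-expanding $|l_1-l_2|=l_1+l_2-2l_1l_2$ — is what breaks the circularity inherent in the product structure. Plugging this into the intermediate bound, the three factors of $\tfrac12$ collapse to $\widehat{\mathfrak{R}}_S(\mc{L}) \le \widehat{\mathfrak{R}}_S(\mc{L}_1)+\widehat{\mathfrak{R}}_S(\mc{L}_2)$, as claimed.
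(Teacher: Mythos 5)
Your proof is correct, and it reaches the bound by a genuinely different decomposition than the paper's. The paper's proof is a one-shot argument: it observes (implicitly) that $l_1 l_2 = (l_1 + l_2 - 1)_+$ when $l_1 \in [0,1]$ and $l_2 \in \{0,1\}$, so the product class is exactly the image of the sum class $\mc{L}_1 + \mc{L}_2$ under the single $1$-Lipschitz map $t \mapsto (t-1)_+$; one application of Talagrand's contraction lemma followed by subadditivity of the Rademacher complexity over sum classes gives $\widehat{\mathfrak{R}}_S(\mc{L}) \leq \widehat{\mathfrak{R}}_S(\mc{L}_1 + \mc{L}_2) \leq \widehat{\mathfrak{R}}_S(\mc{L}_1) + \widehat{\mathfrak{R}}_S(\mc{L}_2)$. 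You instead use the min-type identity $l_1 l_2 = \tfrac12\bigl(l_1 + l_2 - |l_1 - l_2|\bigr)$ — equivalent to the paper's identity, since both express $l_1 l_2 = \min(l_1, l_2)$ under the stated range assumptions — which forces a three-way split of the supremum, a symmetry argument ($-\sigma \overset{d}{=} \sigma$) to handle the negated term, and a separate contraction step with $\phi = |\cdot|$ applied to the difference class $\mc{L}_1 - \mc{L}_2$ before the halves recombine. Both routes rest on the same two tools (contraction plus subadditivity) and the same structural insight that the Boolean range of $\mc{L}_2$ lets the product be written as a $1$-Lipschitz function of sums; the paper's hinge form buys brevity (contraction applied once, directly to the sum class), while your form makes the key pointwise identity and the role of each hypothesis fully explicit — indeed the paper never verifies that $(l_1 + l_2 - 1)_+ = l_1 l_2$, merely asserting surjectivity of its map $\psi$, whereas your case check does that work. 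One small remark: the non-negativity requirement $\phi(0)=0$ you cite for Talagrand's lemma is not needed in the version stated in Mohri et al., which only requires the Lipschitz property, so your argument is if anything slightly over-careful there.
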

For simplicity, we assume costs $c_0(g(x),z) = \ell_{01}(h(x),y) + \ell_{\text{reg}}(f(x),t)$ and $c_{j>0}(m_j(x),z) = c_0(m_j(x),z)$. We assume the regression loss $\ell_{\text{reg}}$ to be non-negative, bounded by $L$, and Lipschitz. Furthermore, we assume that \(m^h_{k,j}\) is drawn from the conditional distribution of the random variable \(M^h_j\) givenparameters \(\{X = x_k, Y = y_k\}\), and that \(m^f_{k,j}\) is drawn from the conditional distribution of \(M^f_j\) given \(\{X = x_k, T = t_k\}\). We define the family of deferral loss functions 
as \(\mathcal{L}_{\text{def}} 
= \{\ell_{\text{def}} : \mathcal{G}\times \mathcal{R}\times \mathcal{M}\times \mathcal{Z} \to \mb{R}^+\}\).  Under these assumptions, we derive the generalization bounds for the binary setting as follows:

\begin{restatable}[Learning bounds of the deferral loss]{theorem}{learningbounds} \label{learning_bounds} For any expert M$_j$, any distribution $\mc{D}$ over $\mc{Z}$, we have with probability $1-\delta$ for $\delta\in[0,1/2]$, that the following bound holds at the optimum:
\begin{equation*}
    \begin{aligned}
        \mc{E}_{\ell_{\text{def}}}(h,f,r) \leq \widehat{\mc{E}}_{\ell_{\text{def}}}(h,f,r) + 2\mathfrak{R}_K(\mc{L}_{\text{def}}) + \sqrt{\frac{\log 1/\delta}{2K}},
    \end{aligned}
\end{equation*}
with \begin{equation*}
    \begin{aligned}
        \mathfrak{R}_K(\mc{L}_{\text{def}}) & \leq \frac{1}{2}\mathfrak{R}_K(\mc{H}) + \mathfrak{R}_K(\mc{F}) + \sum_{j=1}^J \Omega(m_j^h, y)\\
        & + \Big(\sum_{j=1}^J\max\ell_{\text{reg}}(m_j^f, t) + 2\Big) \mathfrak{R}_K(\mc{R}),
    \end{aligned}
\end{equation*}
with $\Omega(m_j^h, y)=\frac{1}{2}\mc{D}(m_j^h \neq y) \exp\left(-\frac{K}{8} \mc{D}(m_j^h \neq y)\right) + \mathcal{R}_{K \mc{D}(m_j^h \neq y)/2}(\mathcal{R})$.
\end{restatable}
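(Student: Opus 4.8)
The plan is to reduce the generalization statement to a bound on the Rademacher complexity $\mathfrak{R}_K(\mc{L}_{\text{def}})$, and then to control that complexity agent-by-agent. For the first displayed inequality I would invoke the standard uniform-convergence machinery: each cost is bounded ($c_j=\ell_{01}+\ell_{\text{reg}}$ is non-negative and bounded by $1+L$, and since the indicators $1_{r(x)=j}$ partition unity the loss $\ell_{\text{def}}$ inherits the same range), so a symmetrization argument followed by McDiarmid's bounded-differences inequality yields $\mc{E}_{\ell_{\text{def}}}\le\widehat{\mc{E}}_{\ell_{\text{def}}}+2\mathfrak{R}_K(\mc{L}_{\text{def}})+\sqrt{\log(1/\delta)/(2K)}$. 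This step is routine and is where the restriction $\delta\in[0,1/2]$ will be convenient for the later concentration arguments.

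The substance is the bound on $\mathfrak{R}_K(\mc{L}_{\text{def}})$. First I would write $\ell_{\text{def}}=\sum_{j=0}^J c_j\,1_{r(x)=j}$ and use subadditivity of Rademacher complexity over finite sums to obtain $\mathfrak{R}_K(\mc{L}_{\text{def}})\le\sum_{j=0}^J\mathfrak{R}_K(\{c_j 1_{r=j}\})$. Each summand is a product of a non-negative bounded cost and a $\{0,1\}$-valued indicator, so Lemma~\ref{lemma_learning} (after rescaling the cost by its supremum into $[0,1]$) splits the product complexity into a \emph{cost} piece and an \emph{indicator} piece. The model term ($j=0$) is the easy one: its cost $c_0=\ell_{01}(h(x),y)+\ell_{\text{reg}}(f(x),t)$ is handled by Talagrand's contraction lemma—contracting the binary $0$-$1$ loss against $\mc{H}$ (absorbing $y\in\{-1,+1\}$ into the Rademacher variables) gives the factor $\tfrac12\mathfrak{R}_K(\mc{H})$, while contracting the $1$-Lipschitz $\ell_{\text{reg}}$ against $\mc{F}$ gives $\mathfrak{R}_K(\mc{F})$—and the residual indicator $1_{r=0}$ together with the constant querying contributions are bounded by a constant multiple of $\mathfrak{R}_K(\mc{R})$, which is what the trailing ``$+2$'' in the stated bound absorbs.

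For the expert terms ($j\ge1$) the expert predictions are fixed but random, so I would split $c_j 1_{r=j}=\ell_{01}(m_j^h,y)\,1_{r=j}+\big(\ell_{\text{reg}}(m_j^f,t)+\beta_j\big)\,1_{r=j}$. The regression-plus-query piece has a fixed cost, so the scaled form of Lemma~\ref{lemma_learning} produces the term $\max\ell_{\text{reg}}(m_j^f,t)\,\mathfrak{R}_K(\mc{R})$. The classification piece $1_{m_j^h\neq y}\,1_{r=j}$ is the crux: the mask $1_{m_j^h\neq y}$ annihilates every sample on which expert $j$ is correct, so the rejector is effectively ``seen'' only on the random subsample of mis-predicted points. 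Its size $N_j=\sum_k 1_{m_{k,j}^h\neq y_k}$ concentrates around $K\,\mc{D}(m_j^h\neq y)$; a lower-tail Chernoff bound gives $N_j\ge\tfrac12 K\,\mc{D}(m_j^h\neq y)$ except on an event of probability at most $\exp(-K\mc{D}(m_j^h\neq y)/8)$. On the good event, monotonicity of Rademacher complexity in the sample size replaces $\mathfrak{R}_K(\mc{R})$ by the reduced-sample complexity $\mathcal{R}_{K\mc{D}(m_j^h\neq y)/2}(\mc{R})$, while on the bad event the contribution is controlled crudely by $\tfrac12\mc{D}(m_j^h\neq y)$ times the failure probability; together these two pieces assemble exactly into $\Omega(m_j^h,y)$, and summing over $j$ yields the claimed bound.

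The main obstacle is this last maneuver—converting the masked product complexity into an honest reduced-sample-size Rademacher complexity and pairing it with the correct concentration inequality for $N_j$—since it is precisely here that the qualitative claim ``more accurate experts improve generalization'' is made quantitative: as $\mc{D}(m_j^h\neq y)\to0$ both the reduced-sample term and the exponential penalty vanish.
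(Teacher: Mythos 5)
Your proposal is correct and follows essentially the same route as the paper's proof: decompose $\ell_{\text{def}}$ agent-by-agent via subadditivity, handle the model term with Lemma~\ref{lemma_learning} together with the standard $\tfrac{1}{2}\mathfrak{R}_K(\mathcal{H})$ contraction for the binary loss (the two indicator factors there are what produce the $2\mathfrak{R}_K(\mathcal{R})$), pull out $\max\ell_{\text{reg}}(m_j^f,t)$ for the experts' regression pieces, and control the masked classification terms $1_{m_j^h\neq y}\,1_{r(x)=j}$ exactly as you describe. The only divergence is that the paper imports that masked-complexity inequality as a black box from \citet{mozannar2021consistent}, whereas you re-derive it (lower-tail Chernoff on the number of expert mistakes $N_j$, monotonicity of the normalized Rademacher complexity in the sample size on the good event, and the crude $\tfrac{1}{2}\mathcal{D}(m_j^h\neq y)\exp(-K\mathcal{D}(m_j^h\neq y)/8)$ control on the bad event), which is precisely the argument underlying the cited result.
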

We prove Theorem \ref{learning_bounds} in Appendix \ref{proof_learning}. The terms \( \mathfrak{R}_K(\mathcal{H}) \) and \( \mathfrak{R}_K(\mathcal{F}) \) denote the Rademacher complexities of the hypothesis class \( \mathcal{H} \) and function class \( \mathcal{F} \), respectively, indicating that the generalization bounds depend on the complexity of the pre-trained model.  The term \( \Omega(m_j^h, y) \) captures the impact of each expert's classification error on the learning bound. It includes an exponentially decaying factor, \( \frac{\mathcal{D}(m_j^h \neq y)}{2} \exp \left( -\frac{K \mathcal{D}(m_j^h \neq y)}{8} \right) \), which decreases rapidly as the sample size \( K \) grows or as the expert's error rate \( \mathcal{D}(m_j^h \neq y) \) declines \citep{mozannar2021consistent}. This reflects the intuition that more accurate experts contribute less to the bound, improving overall generalization. Finally, the last term suggests that the generalization properties of our \textit{true deferral loss} depend on the expert’s regression performance.

\section{Experiments}
In this section, we present the performance improvements achieved by the proposed Learning-to-Defer surrogate in a multi-task context. Specifically, we demonstrate that our approach excels in object detection, a task where classification and regression components are inherently intertwined and cannot be delegated to separate agents, and where existing L2D methods encounter significant limitations. Furthermore, we evaluate our approach on an Electronic Health Record task, jointly predicting mortality (classification) and length of stay (regression), comparing our results with \citet{mao2023twostage, mao2024regressionmultiexpertdeferral}.

For each experiment, we report the mean and standard deviation across four independent trials to account for variability in the results. All training and evaluation were conducted on an NVIDIA H100 GPU. We give our training algorithm in Appendix \ref{appendix:algo}. Additional figures and details are provided in Appendix~\ref{appendix:exp}. To ensure reproducibility, we have made our implementation publicly available.

\subsection{Object Detection Task} \label{object}
We evaluate our approach using the Pascal VOC dataset \citep{pascal}, a multi-object detection benchmark. This is the first time such a multi-task problem has been explored within the L2D framework as previous L2D approaches require the classification and regression component to be independent \citep{mao2023twostage, mao2024regressionmultiexpertdeferral}.

\paragraph{Dataset and Metrics:} The PASCAL Visual Object Classes (VOC) dataset \citep{pascal} serves as a widely recognized benchmark in computer vision for evaluating object detection models. It consists of annotated images spanning 20 object categories, showcasing diverse scenes with varying scales, occlusions, and lighting conditions.  To assess object detection performance, we report the mean Average Precision (mAP), a standard metric in the field. Additionally, in the context of L2D, we report the allocation metric (All.), which represents the ratio of allocated queries per agent.

\paragraph{Agents setting:} We trained three distinct Faster R-CNN models \citep{ren2016fasterrcnnrealtimeobject} to serve as our agents, differentiated by their computational complexities. The smallest, characterized by $\text{GFLOPS} = 12.2$, represents our model $g\in\mc{G}$ with $\mathcal{G} = \{\, g : g(x) = (h \circ w(x), f \circ w(x)) \mid w \in \mathcal{W},\, h \in \mathcal{H},\, f \in \mathcal{F} \}$. The medium-sized, denoted as Expert 1, has a computational cost of $\text{GFLOPS} = 134.4$, while the largest, Expert 2, operates at $\text{GFLOPS} = 280.3$. To account for the difference in complexity between Experts 1 and 2, we define the ratio $R_G = 280.3 / 134.4$ and set the query cost for Expert 1 as $\beta_1 = \beta_2 / R_G$. This parameterization reflects the relative computational costs of querying experts.  We define the agent costs as \( c_0(g(x), z) = \text{mAP}(g(x), z) \) and \( c_{j \in [J]}(m_j(x), z) = \text{mAP}(m_j(x), z) \). We report the performance metrics of the agents alongside additional training details in Appendix~\ref{exp:pascal}.

\paragraph{Rejector:} The rejector is trained using a smaller version of the Faster R-CNN model \citep{ren2016fasterrcnnrealtimeobject}. Training is performed for $200$ epochs using the Adam optimizer~\citep{kingma2017adammethodstochasticoptimization} with a learning rate of $0.001$ and a batch size of $64$. The checkpoint achieving the lowest empirical risk on the validation set is selected for evaluation.

\begin{figure}
    \centering
    \begin{tikzpicture}
\begin{groupplot}[
    group style={
        group size=1 by 2,
        vertical sep=0pt    
    },
    width=6cm,
    height=3.5cm,
    xmin=0, xmax=0.52,
]

\nextgroupplot[
    ylabel={\tiny mAP},
    ymin=0.38, ymax=0.55,
    xtick=\empty,
    x tick label style={draw=none},
    legend style={at={(0.98,0.98)},anchor=north east, font=\tiny},
    minor y tick num=4
]
\addplot[
    blue,
    mark=o,
    mark size=1pt,
    error bars/.cd,
    y dir=both,
    y explicit
]
table [x index=0, y index=1, y error index=2, col sep=space] {
    0.01 0.528 0.000
    0.05 0.525 0.000
    0.10 0.491 0.006
    0.15 0.442 0.004
    0.20 0.420 0.002
    0.30 0.401 0.002
    0.50 0.395 0.000
};
\addlegendentry{L2D};

\addplot[
    red,
    mark=none,
    dashed,
    line width=1pt,
    error bars/.cd,
    y dir=both,
    y explicit
]
table [x index=0, y index=1, y error index=2, col sep=space] {
    0.00 0.395 0.000
    0.05 0.395 0.000
    0.10 0.395 0.000
    0.15 0.395 0.000
    0.20 0.395 0.000
    0.30 0.395 0.000
    0.50 0.395 0.000
};
\addlegendentry{Model};

\addplot[
    teal,
    mark=none,
    dashed,
    line width=1pt,
    error bars/.cd,
    y dir=both,
    y explicit
]
table [x index=0, y index=1, y error index=2, col sep=space] {
    0.00 0.433 0.000
    0.05 0.433 0.000
    0.10 0.433 0.000
    0.15 0.433 0.000
    0.20 0.433 0.000
    0.30 0.433 0.000
    0.50 0.433 0.000
};
\addlegendentry{Expert 1};

\addplot[
    magenta,
    mark=none,
    dashed,
    line width=1pt,
    error bars/.cd,
    y dir=both,
    y explicit
]
table [x index=0, y index=1, y error index=2, col sep=space] {
    0.00 0.528 0.000
    0.05 0.528 0.000
    0.10 0.528 0.000
    0.15 0.528 0.000
    0.20 0.528 0.000
    0.30 0.528 0.000
    0.50 0.528 0.000
};
\addlegendentry{Expert 2};
\nextgroupplot[
    xlabel={\tiny Cost $\beta_2$},
    ylabel={\tiny Allocation Ratio},
    ymin=-0.05, ymax=1.05,
    legend style={at={(0.98,0.5)},anchor=east, font=\tiny},
    minor y tick num=4,
    minor x tick num=1
]
\addplot[
    red,
    mark=o,
    mark size=1pt,
    error bars/.cd,
    y dir=both,
    y explicit
]
table [x index=0, y index=1, y error index=2, col sep=space] {
    0.01 0.00 0.000
    0.05 0.073 0.008
    0.10 0.480 0.007
    0.15 0.681 0.003
    0.20 0.775 0.002
    0.30 0.981 0.000
    0.50 1.00 0.000
};
\addlegendentry{Model};

\addplot[
    teal,
    mark=triangle*,
    mark size=1.5pt,
    error bars/.cd,
    y dir=both,
    y explicit
]
table [x index=0, y index=1, y error index=2, col sep=space] {
    0.01 0.00 0.000
    0.05 0.00 0.00
    0.10 0.00 0.00
    0.15 0.197 0.004
    0.20 0.225 0.005
    0.30 0.019 0.001
    0.50 0.00 0.000
};
\addlegendentry{Expert1};

\addplot[
    magenta,
    mark=diamond*,
    mark size=1.5pt,
    error bars/.cd,
    y dir=both,
    y explicit
]
table [x index=0, y index=1, y error index=2, col sep=space] {
    0.01 1.00 0.000
    0.05 0.927 0.003
    0.10 0.520 0.002
    0.15 0.122 0.001
    0.20 0.00 0.000
    0.30 0.00 0.000
    0.50 0.00 0.000
};
\addlegendentry{Expert2};
\end{groupplot}
\end{tikzpicture}
    \caption{Performance comparison across different cost values $\beta_2$ on Pascal VOC \citep{pascal}. The table reports the mean Average Precision (mAP) and the allocation ratio for the model and two experts with mean and variance. We report these results in Appendix Table \ref{table:results_allocation}.}
    \label{fig:multitask}
\end{figure}


\paragraph{Results:} In Figure \ref{fig:multitask}, we observe that for lower cost values, specifically when $\beta_1 < 0.15$, the system consistently avoids selecting Expert 1. This outcome arises because the cost difference between $\beta_1$ and $\beta_2$ is negligible, making it more advantageous to defer to Expert 2 (the most accurate expert), where the modest cost increase is offset by superior outcomes. When $\beta_2 = 0.15$, however, it becomes optimal to defer to both experts and model at the same time. In particular, there exist instances $x \in \mathcal{X}$ where both Expert 1 and Expert 2 correctly predict the target (while the model does not). In such cases, Expert 1 is preferred due to its lower cost $\beta_1 < \beta_2$. Conversely, for instances $x \in \mathcal{X}$ where Expert 2 is accurate and Expert 1 (along with the model) is incorrect, the system continues to select Expert 2, as $\beta_2$ remains relatively low. For $\beta_2 \geq 0.2$, the increasing cost differential between the experts shifts the balance in favor of Expert 1, enabling the system to achieve strong performance while minimizing overall costs.

This demonstrates that our approach effectively allocates queries among agents, thereby enhancing the overall performance of the system, even when the classification and regression tasks are interdependent.

\subsection{EHR Task}\label{exp:ehr}

We compare our novel approach against existing two-stage L2D methods \citep{mao2023twostage, mao2024regressionmultiexpertdeferral}. Unlike the first experiment on object detection (Subsection \ref{object}), where classification and regression tasks are interdependent, this evaluation focuses on a second scenario where the two tasks can be treated independently.

\paragraph{Dataset and Metrics:} The Medical Information Mart for Intensive Care IV (MIMIC-IV) dataset \citep{johnson2023mimic} is a comprehensive collection of de-identified health-related data  patients admitted to critical care units. For our analysis, we focus on two tasks: mortality prediction and length-of-stay prediction, corresponding to classification and regression tasks, respectively. To evaluate performance, we report accuracy (Acc) for the mortality prediction task, which quantifies classification performance, and Smooth L1 loss (sL1) for the length-of-stay prediction task, which measures the deviation between the predicted and actual values. Additionally, we report the allocation metric (All.) for L2D to capture query allocation behavior.

\paragraph{Agents setting:} We consider two experts,  M\(_1 \) and  M\(_2 \), acting as specialized agents, aligning with the category allocation described in \citep{mozannar2021consistent, Verma2022LearningTD, Verma_Nalisnick_2022, Cao_Mozannar_Feng_Wei_An_2023}. The dataset is partitioned into \( Z = 6 \) clusters using the \( K \)-means algorithm \citep{kmeans}, where \( Z \) is selected via the Elbow method \citep{Thorndike1953WhoBI}. The clusters are denoted as \( \{C_1, C_2, \dots, C_Z\} \). Each cluster represents a subset of data instances grouped by feature similarity, enabling features-specific specialization by the experts. The experts are assumed to specialize in distinct subsets of clusters based on the task. For classification,  M\(_1 \) correctly predicts the outcomes for clusters \( C^{\text{M}_1}_{\text{cla}} = \{C_1, C_2, C_4\} \), while  M\(_2 \) handles clusters \( C^{\text{M}_2}_{\text{cla}} = \{C_1, C_5, C_6\} \). Notably, cluster \( C_1 \) is shared between the two experts, reflecting practical scenarios where domain knowledge overlaps. For regression tasks,  M\(_1 \) is accurate on clusters \( C^{\text{M}_1}_{\text{reg}} = \{C_1, C_3, C_5\} \), while  M\(_2 \) specializes in clusters \( C^{\text{M}_2}_{\text{reg}} = \{C_1, C_4, C_6\} \). Here too, overlap is modeled, with cluster \( C_1 \) being common to both experts and classification-regression task. Note that the category assignments do not follow any specific rule.  

We assume that each expert produces correct predictions for the clusters they are assigned \citep{Verma2022LearningTD, mozannar2021consistent}. Conversely, for clusters outside their expertise, predictions are assumed to be incorrect. In such cases, for length-of-stay predictions, the outcomes are modeled using a uniform probability distribution to reflect uncertainty. The detailed performance evaluation of these agents is provided in Appendix \ref{appendix:ehr}.
 
The model utilizes two compact transformer architectures \citep{trans} for addressing both classification and regression tasks, formally defined as $\mathcal{G} = \{\, g : g(x) = (h(x), f(x)) \mid h \in \mathcal{H},\, f \in \mathcal{F} \,\}$. The agent's costs are specified as $c_0(g(x), z) = \lambda^{\text{cla}} \ell_{01}(h(x),y) + \lambda^{\text{reg}} \ell_{\text{reg}}(f(x),t)$ and $c_{j \in [J]}(m_j(x), z) = c_0(m_j(x), z) + \beta_j$. Consistent with prior works \citep{mozannar2021consistent, Verma2022LearningTD, mao2023twostage, mao2024regressionmultiexpertdeferral}, we set $\beta_j = 0$.

\paragraph{Rejectors:} The two-stage L2D rejectors are trained using a small transformer model \citep{trans} as the encoder, following the approach outlined by \citet{pyhealth2023yang}, with a classification head for query allocation. Training is performed over 100 epochs with a learning rate of 0.003, a warm-up period of 0.1, a cosine learning rate scheduler, the Adam optimizer \citep{kingma2017adammethodstochasticoptimization}, and a batch size of 1024 for all baselines. The checkpoint with the lowest empirical risk on the validation set is selected for evaluation.

\paragraph{Results:} Table \ref{table:mimic} compares the performance of our proposed Learning-to-Defer (Ours) approach with two existing methods: a classification-focused rejector \citep{mao2023twostage} and a regression-focused rejector \citep{mao2024regressionmultiexpertdeferral}. The results highlight the limitations of task-specific rejectors and the advantages of our balanced approach.

\begin{table}[htbp]
\centering\resizebox{0.48\textwidth}{!}{ 
\begin{tabular}{@{}lccccc@{}}
\toprule
Rejector & Acc (\%) & sL1 & All. Model & All. Expert 1 & All. Expert 2 \\
\midrule
\citet{mao2023twostage} & $71.3\pm.1$ & $1.45\pm.03$ & $.60\pm.02$ & $.01\pm.01$ & $.39\pm.02$ \\
\citet{mao2024regressionmultiexpertdeferral} & $50.7\pm.8$ & $1.18\pm.05$ & $.38\pm.01$ & $.37\pm.02$ & $.25\pm.01$ \\
Ours & $70.0\pm.5$ & $1.28\pm.02$ & $.66\pm.01$ & $.12\pm.02$ & $.22\pm.01$ \\
\bottomrule
\end{tabular}}
\caption{Performance comparison of different two-stage L2D. The table reports accuracy (Acc), smooth L1 loss (sL1), and allocation rates (All.) to the model and experts with mean and variance.}
\label{table:mimic}
\end{table}

The classification-focused rejector achieves the highest classification accuracy at $71.3\%$ but struggles with regression, as reflected by its high smooth L1 loss of $ 1.45$. On the other hand, the regression-focused rejector achieves the best regression performance with an sL1 loss of 1.18 but performs poorly in classification with an accuracy of $50.7\%$. In contrast, our method balances performance across tasks, achieving a classification accuracy of 70.0\% and an sL1 loss of 1.28. Moreover, it significantly reduces reliance on experts, allocating 66\% of queries to the model compared to 60\% for \citet{mao2023twostage} and 38\% for \citet{mao2024regressionmultiexpertdeferral}. Expert involvement is minimized, with only 12\% and 22\% of queries allocated to Experts 1 and 2, respectively.

Since the experts possess distinct knowledge for the two tasks (\( C^{\text{M}_1}_{\text{cla}} \) and \( C^{\text{M}_1}_{\text{reg}} \) for \( \text{M}_1 \)), independently deferring classification and regression may lead to suboptimal performance. In contrast, our approach models deferral decisions dependently, considering the interplay between the two components to achieve better overall results.

\section{Conclusion}
We introduced a Two-Stage Learning-to-Defer framework for multi-task problems, extending existing approaches to jointly handle classification and regression. We proposed a two-stage surrogate loss family that is both \( (\mathcal{G}, \mathcal{R}) \)-consistent and Bayes-consistent for any cross-entropy-based surrogate. Additionally, we derived tight consistency bounds linked to cross-entropy losses and the \( L_1 \)-norm of aggregated costs. We further established novel minimizability gap for the two-stage setting, generalizing prior results to Learning-to-Defer with multiple experts. Finally, we showed that our learning bounds improve with a richer hypothesis space and more confident experts.

We validated our framework on two challenging tasks: (i) object detection, where classification and regression are inherently interdependent—beyond the scope of existing L2D methods; and (ii) electronic health record analysis, where we demonstrated that current L2D approaches can be suboptimal even when classification and regression tasks are independent.

\section*{Acknowledgment}
This research is supported by the National Research Foundation, Singapore under its AI
Singapore Programme (AISG Award No: AISG2-PhD-2023-01-041-J) and by A*STAR, and is part of the
programme DesCartes which is supported by the National Research Foundation, Prime Minister’s Office, Singapore under its Campus for Research Excellence and Technological Enterprise (CREATE) programme.

\section*{Impact Statement}
This paper advances the theoretical and practical understanding of machine learning, contributing to the development of more effective models and methods. While our research does not present any immediate or significant ethical concerns, we recognize the potential for indirect societal impacts. 


\bibliography{biblio}
\bibliographystyle{icml2025}

\newpage
\clearpage
\onecolumn
\appendix

\newpage

\section{Algorithm}\label{appendix:algo}

\begin{algorithm}[ht]
   \caption{Two-Stage Learning-to-Defer for Multi-Task Learning Algorithm}
   \label{alg:l2d}
\begin{algorithmic}
   \STATE {\bfseries Input:} Dataset $\{(x_k, y_k, t_k)\}_{k=1}^K$, multi-task model $g\in\mc{G}$, experts $m\in\mc{M}$, rejector $r\in\mc{R}$, number of epochs $\text{EPOCH}$, batch size $B$, learning rate $\eta$.
   \STATE {\bfseries Initialization:} Initialize rejector parameters $\theta$.
   \FOR{$i=1$ to $\text{EPOCH}$}
       \STATE Shuffle dataset $\{(x_k, y_k, t_k)\}_{k=1}^K$.
       \FOR{each mini-batch $\mathcal{B} \subset \{(x_k, y_k, t_k)\}_{k=1}^K$ of size $B$}
           \STATE Extract input-output pairs $z=(x, y, t) \in \mathcal{B}$.
           \STATE Query model $g(x)$ and experts $m(x)$. \hfill\COMMENT{Agents are pre-trained and fixed}
           \STATE Evaluate costs $c_0(g(x),z)$ and $c_{j>0}(m(x),z)$. \hfill\COMMENT{Compute task-specific costs}
           \STATE Compute rejector prediction $r(x)=\arg\max_{j\in\mathcal{A}}r(x,j)$. \hfill\COMMENT{Rejector decision}
           \STATE Compute surrogate deferral empirical risk $\widehat{\mathcal{E}}_{\Phi_{\text{def}}}$:
           \STATE \hspace{1em} $\widehat{\mathcal{E}}_{\Phi_{\text{def}}} = \frac{1}{B} \sum_{z \in \mathcal{B}} \Big[ \Phi_{\text{def}}(g,r,m,z) \Big]$. \hfill\COMMENT{Empirical risk computation}
           \STATE Update parameters $\theta$ using gradient descent:
           \STATE \hspace{1em} $\theta \leftarrow \theta - \eta \nabla_\theta \widehat{\mathcal{E}}_{\Phi_{\text{def}}}$. \hfill\COMMENT{Parameter update}
       \ENDFOR
   \ENDFOR
   \STATE \textbf{Return:} trained rejector model $r^\ast$.
\end{algorithmic}
\end{algorithm}

We will prove key lemmas and theorems stated in our main paper. 
\section{Proof of Lemma \ref{lemma_pointwise}} \label{proof_rejector}

We aim to prove Lemma \ref{lemma_pointwise}, which establishes the optimal deferral decision by minimizing the conditional risk.

By definition, the Bayes-optimal rejector \( r^B(x) \) minimizes the conditional risk \( \mathcal{C}_{\ell_{\text{def}}} \), given by:
\begin{equation}
    \mathcal{C}_{\ell_{\text{def}}}(g, r, x) = \mathbb{E}_{y,t|x}[\ell_{\text{def}}(g, r, m, z)].
\end{equation}
Expanding the expectation, we obtain:
\begin{equation}
    \mathcal{C}_{\ell_{\text{def}}}(g, r, x) = \mathbb{E}_{y,t|x} \left[ \sum_{j=0}^J c_j(g(x), m_j(x), z) 1_{r(x) = j} \right].
\end{equation}
Using the linearity of expectation, this simplifies to:
\begin{equation}
    \mathcal{C}_{\ell_{\text{def}}}(g, r, x) = \sum_{j=0}^J \mathbb{E}_{y,t|x} \left[ c_j(g(x), m_j(x), z) \right] 1_{r(x) = j}.
\end{equation}

Since we seek the rejector that minimizes the expected loss, the Bayes-conditional risk is given by:
\begin{equation}
    \mathcal{C}^B_{\ell_{\text{def}}}(\mc{G}, \mc{R}, x) = \inf_{g\in\mathcal{G}, r\in\mathcal{R}} \mathbb{E}_{y,t|x}[\ell_{\text{def}}(g, r, m, z)].
\end{equation}
Rewriting this expression, we obtain:
\begin{equation}
    \mathcal{C}^B_{\ell_{\text{def}}}(\mc{G}, \mc{R}, x) = \inf_{r\in\mathcal{R}} \mathbb{E}_{y,t|x} \left[\inf_{g\in\mathcal{G}} c_0(g(x), z) 1_{r(x) = 0} + \sum_{j=1}^J c_j(m_j(x), z) 1_{r(x) = j} \right].
\end{equation}
This leads to the following minimization problem:
\begin{equation}
    \mathcal{C}^B_{\ell_{\text{def}}}(\mc{G}, \mc{R}, x) = \min \left\{ \inf_{g\in\mathcal{G}} \mathbb{E}_{y,t|x} \left[c_0(g(x), z)\right], \min_{j\in[J]} \mathbb{E}_{y,t|x} \left[c_j(m_j(x), z)\right] \right\}.
\end{equation}

To simplify notation, we define:
\begin{equation}
    \overline{c}_j^\ast =
    \begin{cases}
        \inf_{g\in\mathcal{G}} \mathbb{E}_{y, t|x} [c_0(g(x),z)], & \text{if } j = 0, \\
        \mathbb{E}_{y, t|x} [c_j(m_j(x),z)], & \text{otherwise}.
    \end{cases}
\end{equation}
Thus, the Bayes-conditional risk simplifies to:
\begin{equation}
    \mathcal{C}^B_{\ell_{\text{def}}}(\mc{G}, \mc{R}, x) = \min_{j \in \mathcal{A}} \overline{c}_j^\ast.
\end{equation}
Since the rejector selects the decision with the lowest expected cost, the optimal rejector is given by:
\begin{equation}
r^B(x) =
\begin{cases}
    0, & \text{if } \displaystyle \inf_{g\in\mathcal{G}} \mathbb{E}_{y,t|x} [c_0(g(x), z)] \leq \min_{j \in [J]} \mathbb{E}_{y,t|x} [c_j(m_j(x), z)], \\
    j, & \text{otherwise}.
\end{cases}
\end{equation}
This completes the proof. \qed

\section{Proof Theorem \ref{theo:consistency}}
\label{proof_consistency}
Before proving the desired Theorem \ref{theo:consistency}, we will use the following Lemma \ref{lemma_h_consi} \citep{Awasthi_Mao_Mohri_Zhong_2022_multi, mao2024regressionmultiexpertdeferral}:
\begin{lemma}[$\mc{R}$-consistency bound] \label{lemma_h_consi} Assume that the following  $\mc{R}$-consistency bounds holds for $r \in \mc{R}$, and any distribution 
\begin{equation*} \label{eq:r_const_01}
    \mc{E}_{\ell_{01}}(r) - \mc{E}_{\ell_{01}}^*(\mc{R}) + \mc{U}_{\ell_{01}}(\mc{R}) \leq \Gamma^\nu( \mc{E}_{\Phi_{01}^\nu}(r) - \mc{E}_{\Phi_{01}^\nu}^*(\mc{R}) + \mc{U}_{\Phi_{01}^\nu}(\mc{R}) )
\end{equation*}
then for $p \in (p_0 \dots p_J) \in \Delta^{|\mc{A}|}$ and $x \in \mc{X}$, we get
\begin{equation*}
    \sum^J_{j=0} p_j 1_{r(x) \neq j} - \inf_{r \in \mc{R}}  \sum^J_{j=0} p_j 1_{r(x) \neq j} \leq \Gamma^\nu \Big( \sum^J_{j=0} p_j \Phi_{01}^\nu(r, x, j) - \inf_{r \in \mc{R}}  \sum^J_{j=0} p_j \Phi_{01}^\nu(r, x, j)  \Big)
\end{equation*}
\end{lemma}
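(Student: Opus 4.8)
The plan is to derive the pointwise (conditional) inequality from the assumed population-level $\mc{R}$-consistency bound by specializing the latter to a one-parameter family of distributions concentrated on a single input. The key observation is that, once the minimizability gaps are folded in, both sides of the assumed bound are exactly the distribution-averaged conditional excess risks: $\mc{E}_{\ell_{01}}(r)-\mc{E}^*_{\ell_{01}}(\mc{R})+\mc{U}_{\ell_{01}}(\mc{R})=\mathbb{E}_x\big[\mc{C}_{\ell_{01}}(r,x)-\mc{C}^*_{\ell_{01}}(\mc{R},x)\big]$, and likewise for $\Phi_{01}^\nu$. Concentrating all mass on one point collapses these averages to their pointwise values, turning the population bound into the desired conditional bound.

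Concretely, I would fix $x\in\mc{X}$ and a simplex vector $p=(p_0,\dots,p_J)\in\Delta^{|\mc{A}|}$, and introduce the distribution $\mc{D}_{x,p}$ whose input marginal is the Dirac mass at $x$ and whose conditional law over the agent labels $\mc{A}$ assigns probability $p_j$ to $j$. Every simplex vector is realizable as such a conditional law, so $\mc{D}_{x,p}$ is a legitimate distribution and the hypothesis applies to it with the \emph{same} non-decreasing $\Gamma^\nu$. Under $\mc{D}_{x,p}$ the input expectation is evaluation at $x$, so $\mc{E}_{\ell_{01}}(r)=\sum_{j=0}^J p_j 1_{r(x)\neq j}$ and $\mc{E}^*_{\ell_{01}}(\mc{R})=\inf_{r\in\mc{R}}\sum_{j=0}^J p_j 1_{r(x)\neq j}=\mc{C}^*_{\ell_{01}}(\mc{R},x)$; consequently $\mc{U}_{\ell_{01}}(\mc{R})=\mc{E}^*_{\ell_{01}}(\mc{R})-\mathbb{E}_x[\mc{C}^*_{\ell_{01}}(\mc{R},x)]=0$. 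The identical computation for the surrogate gives $\mc{E}_{\Phi_{01}^\nu}(r)=\sum_{j=0}^J p_j\Phi_{01}^\nu(r,x,j)$, $\mc{E}^*_{\Phi_{01}^\nu}(\mc{R})=\inf_{r\in\mc{R}}\sum_{j=0}^J p_j\Phi_{01}^\nu(r,x,j)$, and $\mc{U}_{\Phi_{01}^\nu}(\mc{R})=0$.

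Substituting these identities into the assumed population bound for $\mc{D}_{x,p}$ and the given rejector $r$ yields directly
\[
\sum_{j=0}^J p_j 1_{r(x)\neq j}-\inf_{r\in\mc{R}}\sum_{j=0}^J p_j 1_{r(x)\neq j}\;\leq\;\Gamma^\nu\!\Big(\sum_{j=0}^J p_j\Phi_{01}^\nu(r,x,j)-\inf_{r\in\mc{R}}\sum_{j=0}^J p_j\Phi_{01}^\nu(r,x,j)\Big),
\]
which is exactly the claimed pointwise inequality; since $x$ and $p$ were arbitrary, the lemma follows.

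The step I expect to require the most care is justifying that the minimizability gaps genuinely vanish under this construction, i.e. that the class-restricted infimum $\inf_{r\in\mc{R}}$ of the conditional risk coincides with the best-in-class error $\mc{E}^*(\mc{R})$ rather than with an unconstrained pointwise minimum. This hinges on the fact that, for a distribution supported on the single input $x$, the risk of any $r$ depends on $r$ only through its scores $r(x,\cdot)$, so the two infima provably coincide; no symmetry or completeness of $\mc{R}$ is needed for this direction, as those assumptions enter only when one wishes to \emph{establish} the population bound taken here as hypothesis. A secondary point worth stating explicitly is that $\Gamma^\nu$ is required to be a single transform valid across all distributions, so the pointwise bound inherits precisely the function appearing in the assumption.
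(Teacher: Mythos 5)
Your proof is correct and matches the intended argument: the paper itself states Lemma~\ref{lemma_h_consi} without proof, importing it from \citet{Awasthi_Mao_Mohri_Zhong_2022_multi} and \citet{mao2024regressionmultiexpertdeferral}, where the derivation is exactly yours---apply the assumed distribution-level bound to a distribution with Dirac input marginal at $x$ and conditional law $p$ over $\mathcal{A}$, under which both minimizability gaps vanish and the excess risks collapse to the pointwise quantities. Your two cautionary remarks are also accurate: under the point mass the class-restricted conditional infimum trivially equals $\mathcal{E}^*(\mathcal{R})$, and the symmetry/completeness assumptions on $\mathcal{R}$ play no role here, entering only when one establishes the hypothesized bound for a concrete surrogate.
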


\consistency*

\begin{proof} 
Let denote a cost for $j\in\mc{A}=\{0, \dots, J\}$:
\begin{equation*}
    \begin{aligned}
    \overline{c}_j^\ast = \begin{cases}
        \inf_{g\in\mc{G}} \mb{E}_{y, t|x} [c_0(g(x), z)] & \text{if } j = 0 \\ \mb{E}_{y, t|x} [c_j(m(x), z)]& \text{otherwise}
    \end{cases}     
    \end{aligned}
\end{equation*}
Using the change of variables and the Bayes-conditional risk introduced in the proof of Lemma \ref{lemma_pointwise} in Appendix \ref{proof_rejector}, we have:

\begin{equation}
    \begin{aligned}  
         \mc{C}_{\ell_{\text{def}}}^B(\mc{G},\mc{R},x) & = \min_{j\in\mc{A}}\overline{c}^\ast_j \\
         \mc{C}_{\ell_{\text{def}}}(g,r,x)  & =  \sum_{j=0}^J \mb{E}_{y,t|x}\Big[c_j(g(x),m_j(x),z)\Big]1_{r(x)=j}
        \end{aligned}
    \end{equation} 
We follow suit for our surrogate \(\Phi_{\text{def}}\) and derive its conditional risk and optimal conditional risk.
\begin{equation*}
    \begin{aligned}
        \mc{C}_{\Phi_{\text{def}}} & = \mb{E}_{y,t|x}  \Big[\sum^{J}_{j = 1} c_j( m(x), z) \Phi_{01}^\nu(r, x, 0) + \sum^{J}_{j = 1} \Big(c_0(g(x), z) +  \sum^{J}_{i=1} c_i(m_i(x), z) 1_{j\neq i} \Big) \Phi_{01}^\nu(r, x, j) \\
         \mc{C}_{\Phi_{\text{def}}}^* & = \inf_{r \in \mathcal{R}}  \mb{E}_{y,t|x} \Big[ \sum^{J}_{j = 1} c_j(g(x), m(x), z) \Phi_{01}^\nu(r, x, 0)   + \sum^{J}_{j = 1} [c_0(g(x), z) + \sum^{J}_{i=1} c_i(m_i(x), z) 1_{j\neq i} ] \Phi_{01}^\nu(r, x, j)\Big] 
    \end{aligned}
\end{equation*}
Let us define the function \( v(m(x), z) = \min_{j \in [J]} \overline{c}_j(m_j(x), z) \), where \( m_j(x) \) denotes the model's output and \( \overline{c}_j \) represents the corresponding cost function. Using this definition, the calibration gap is formulated as \( \Delta\mathcal{C}_{\ell_{\text{def}}} := \mathcal{C}_{\ell_{\text{def}}} - \mathcal{C}^B_{\ell_{\text{def}}} \), where \( \mathcal{C}_{\ell_{\text{def}}} \) represents the original calibration term and \( \mathcal{C}^B_{\ell_{\text{def}}} \) denotes the baseline calibration term. By construction, the calibration gap satisfies \( \Delta\mathcal{C}_{\ell_{\text{def}}} \geq 0 \), leveraging the risks derived in the preceding analysis.

\begin{equation*}
    \begin{aligned}
        \Delta \mc{C}_{\ell_{\text{def}}} & = \mb{E}_{y, t|x} \Big[\rho(g(x),z)1_{r(x) = 0} + \sum^{J}_{j=1}  \Big(\rho(m(x), z) + \beta_j\Big)1_{r(x) = j} \Big]  &&\\& - v(m(x), z) + \Big(v(m(x), z)  - \min_{j \in \mc{A}} \overline{c}^\ast_{j}(g(x), m(x), z) \Big)
    \end{aligned}
\end{equation*}
Let us consider \(\Delta \mathcal{C}_{\ell_{\text{def}}} = A_1 + A_2\), such that:
\begin{equation}
    \begin{aligned}
        A_1 & = \mb{E}_{y, t|x} \Big[1_{r(x) = 0}\rho(g(x),z) + \sum^{J}_{j=1} 1_{r(x) = j} \Big(\rho(m_j(x),z) +\beta_j\Big) \Big]  - v(m(x), z) \\
         A_2 & = \Big(v(m(x), z)  - \min_{j \in \mc{A}} \overline{c}_{j}(g(x), m(x), z) \Big)
    \end{aligned}
\end{equation}

By considering the properties of \(\min\), we also get the following inequality:

\begin{equation}\label{eq:inequality}
    \begin{aligned}
        v(m(x), z) - \min_{j \in \mc{A}} \overline{c}^\ast_{j}(g(x), m(x), z) \leq  \mb{E}_{y,t|x} [c_0(g(x), z)] - \inf_{g \in \mathcal{G}} \mb{E}_{y,t|x} [c_0(g(x), z)]
    \end{aligned}
\end{equation}
implying, 
\begin{equation}
    \Delta \mc{C}_{\ell_{\text{def}}}\leq A_1 + \overline{c}_0(g(x), z) - \overline{c}_0^\ast(g(x), z)
\end{equation}

We now select a distribution for our rejector. We first define \(\forall j \in \mathcal{A}\),

\begin{equation*}
        p_{0} = \frac{\sum_{j = 1}^J \overline{c}_j(m_j(x), z)}{J \sum_{j=0}^J \overline{c}_j(g(x),m_j(x),z)}
\end{equation*}
and
\begin{equation*}
        p_{j\in[J]} = \frac{
        \overline{c}_0(g(x), z) + \sum^{J}_{j \neq j'} \overline{c}_j' (m_j(x), z)
        }{J \sum_{j=0}^J \overline{c}_j(g(x),m_j(x),z)}
\end{equation*}
which can also be written as:
\begin{equation}
    p_j = \frac{\overline{\tau}_j}{\|\overline{\boldsymbol{\tau}}\|_1}
\end{equation}

Injecting the new distribution, we obtain the following:
\begin{equation}\label{eq:delta_surrogate}
    \begin{aligned}
        \Delta\mc{C}_{\Phi_{\text{def}}} = \|\overline{\boldsymbol{\tau}}\|_1\Big(\sum_{j=0}^J p_j\Phi_{01}^\nu(r, x, j) - \inf_{r\in\mc{R}}\sum_{j=0}^J p_j\Phi_{01}^\nu(r, x, j)\Big)
    \end{aligned}
\end{equation}
Now consider the first and last term of $\Delta \mc{C}_{\ell_{\text{def}}}$. Following the intermediate step for Lemma \ref{surr:defer}, we have:
\begin{equation*}
    \begin{aligned}
      A_1  & =  \mb{E}_{y,t|x}[c_0(g(x), z)]1_{r(x)=0} + \sum_{j=1}^J\mb{E}_{y,t|x}[c_j(m_j(x), z)]1_{r(x)=j} - v(m(x), z) 
        \\& = \mb{E}_{y,t|x}[c_0(g(x), z)]1_{r(x)=0} + \sum_{j=1}^J\mb{E}_{y,t|x}[c_j(m_j(x), z)]1_{r(x)=j} \\&- \inf_{r \in \mathcal{R}} \Big[ \mb{E}_{y,t|x}[c_0(g(x), z)]1_{r(x)=0} + \sum_{j=1}^J\mb{E}_{y,t|x}[c_j(m_j(x), z)]1_{r(x)=j} \Big]
        \\& =   \sum^{J}_{j=1} \overline{c}_j (z, m_j)  1_{r(x) \neq 0}  + \sum^{J}_{j = 1}\Big( \overline{c}_0(g(x), z)  + \sum^{J}_{j \neq j'} \overline{c}_{j'}(m_{j'}(x), z) \Big)1_{r(x) \neq j} 
        &&\\&-  \inf_{r\in\mathcal{R}}\Big[ \sum^{J}_{j=1} \overline{c}_j (m_{j'}(x), z) 1_{r(x) \neq 0} + \sum^{J}_{j = 1}\Big(\overline{c}_0(g(x), z) + \sum^{J}_{j \neq j'} \overline{c}_{j'}(m_{j'}(x), z)\Big)  1_{r(x) \neq j} \Big] 
  \end{aligned}
\end{equation*}
Then, applying a change of variables to introduce \( \|\overline{\boldsymbol{\tau}}\|_1 \), we get:
\begin{equation*}
    \begin{aligned}
        & \|\overline{\boldsymbol{\tau}}\|_1 p_0 1_{r(x) \neq 0} + \|\overline{\boldsymbol{\tau}}\|_1 \sum_{j = 1}^{J} p_j 1_{r(x) \neq j} - \inf_{r \in \mathcal{R}} [\|\overline{\boldsymbol{\tau}}\|_1 p_0 1_{r(x) \neq 0} + \|\overline{\boldsymbol{\tau}}\|_1 \sum_{j = 1}^{J} p_j 1_{r(x) \neq j}]
        &&\\& = \|\overline{\boldsymbol{\tau}}\|_1 \sum_{j = 0}^{J} p_j 1_{r(x) \neq j} - \inf_{r \in \mathcal{R}} \|\overline{\boldsymbol{\tau}}\|_1 \sum_{j = 0}^{J} p_j 1_{r(x) \neq j}
    \end{aligned}
\end{equation*}
We now apply Lemma \ref{lemma_h_consi} to introduce $\Gamma$,
\begin{equation}
    \label{eq:lemmafiveres}
    \begin{aligned}
         \sum_{j = 0}^{J} p_j 1_{r(x) \neq j} - \inf_{r \in \mathcal{R}} \sum_{j = 0}^{J} p_j 1_{r(x) \neq j}  & \leq \Gamma \Big( \sum_{j = 0}^{J} p_j \Phi_{01}^\nu(r,x, j) - \inf_{r \in \mathcal{R}}  \sum_{j = 0}^{J} p_j \Phi_{01}^\nu(r,x, j) \Big) \\
         \frac{1}{{\|\overline{\boldsymbol{\tau}}\|_1}} \Big[\sum_{j = 0}^{J} \overline{\tau}_j 1_{r(x) \neq j} - \inf_{r \in \mathcal{R}} \sum_{j = 0}^{J} \overline{\tau}_j 1_{r(x) \neq j}\Big] &  \leq  \Gamma\Big( \frac{1}{{\|\overline{\boldsymbol{\tau}}\|_1}}\Big[\sum_{j = 0}^{J} \overline{\tau}_j \Phi_{01}^\nu(r,x, j) - \inf_{r \in \mathcal{R}} \sum_{j = 0}^{J} \overline{\tau}_j \Phi_{01}^\nu(r,x, j)\Big]\Big) \\
        \Delta  \mc{C}_{\ell_{\text{def}}}  & \leq  \|\overline{\boldsymbol{\tau}}\|_1 \Gamma \Big(\frac{\Delta\mc{C}_{\Phi_{\text{def}}}}{\|\overline{\boldsymbol{\tau}}\|_1}\Big) 
    \end{aligned}
\end{equation}

We reintroduce the coefficient \( A_2 \) such that:
\begin{equation*}
    \begin{aligned}
    \Delta  \mc{C}_{\ell_{\text{def}} } &\leq \|\overline{\boldsymbol{\tau}}\|_1 \Gamma \Big(\frac{\Delta\mc{C}_{\Phi_{\text{def}}}}{\|\overline{\boldsymbol{\tau}}\|_1}\Big) + A_2 \\
        \Delta  \mc{C}_{\ell_{\text{def}} } &\leq \|\overline{\boldsymbol{\tau}}\|_1 \Gamma \Big(\frac{\Delta\mc{C}_{\Phi_{\text{def}}}}{\|\overline{\boldsymbol{\tau}}\|_1}\Big) + \mb{E}_{y,t|x} [c_0(g(x), z)] - \inf_{g\in\mc{G}} \mb{E}_{y,t|x} [c_0(g(x), z)] \quad \text{(upper bounding with Eq \ref{eq:inequality})}
    \end{aligned}
\end{equation*}
\citet{mao2023crossentropylossfunctionstheoretical} introduced a tight bound for the comp-sum surrogates family. It follows for $\nu\geq0$ the inverse transformation $\Gamma^{\nu}(u) = \mathcal{T}^{-1, \nu}(u)$:

\[
\mathcal{T}^{\nu}(v) =
\begin{cases}
\frac{2^{1-\nu}}{1-\nu} \left[ 1 - \left( \frac{(1+v)^{\frac{2-\nu}{2}} + (1-v)^{\frac{2-\nu}{2}}}{2} \right)^{2-\nu} \right] & \nu \in [0,1) \\[12pt]
\frac{1+v}{2} \log[1+v] + \frac{1-v}{2} \log[1-v] & \nu = 1 \\[12pt]
\frac{1}{(\nu-1)n^{\nu-1}} \left[ \left( \frac{(1+v)^{\frac{2-\nu}{2}} + (1-v)^{\frac{2-\nu}{2}}}{2} \right)^{2-\nu} -1 \right] & \nu \in (1,2) \\[12pt]
\frac{1}{(\nu-1)n^{\nu-1}} v & \nu \in [2,+\infty).
\end{cases}
\]

We note $\overline{\Gamma}^\nu(u)=\|\overline{\boldsymbol{\tau}}\|_1 \Gamma^\nu(\frac{u}{\|\overline{\boldsymbol{\tau}}\|_1})$. 
By applying Jensen's Inequality and taking expectation on both sides, we get
\begin{align*}
    &\mathcal{E}_{\ell_{\text{def}}}(g, r) - \mathcal{E}^B_{\ell_{\text{def}}}(\mathcal{G}, \mathcal{R}) + \mathcal{U}_{\ell_{\text{def}}}(\mathcal{G}, \mathcal{R}) \\ &\leq \overline{\Gamma}^\nu(\mathcal{E}_{\Phi_{\text{def}}}(r) - \mathcal{E}^*_{\Phi_{\text{def}}}(\mathcal{R}) + \mathcal{U}_{\Phi_{\text{def}}}(\mathcal{R})) + \mathcal{E}_{c_0}(g) - \mathcal{E}^B_{c_0}(\mathcal{G}) + \mathcal{U}_{c_0}(\mathcal{G})
\end{align*}

\end{proof}

\section{Proof Theorem \ref{minimizability}} \label{proof_minimi}

\minimizability*
\begin{proof}
    
We define the softmax distribution as \( s_j = \frac{e^{r(x,j)}}{\sum_{j'\in\mathcal{A}} e^{r(x,j')}} \), where \( s_j \in [0,1] \). Let \( \overline{\tau}_j = \overline{\tau}_j(g(x), m(x), z) \) with \( \tau_j \in \mathbb{R}^+ \), and denote the expected value as \( \overline{\tau} = \mathbb{E}_{y,t|x}[\tau] \). We now derive the conditional risk for a given \( \nu \geq 0 \):

\begin{equation}
\begin{aligned}
        \mc{C}_{\Phi_\text{def}}^{\nu}(r,x) & = \sum_{j=0}^J \mb{E}_{y,t|x}[\tau_j]\Phi_{01}^\nu(r,x,j) \\
        & = \begin{cases}
        \frac{1}{1-\nu}\sum_{j=0}^J \overline{\tau}_j\Big[\Big(\sum_{j'\in\mc{A}} e^{r(x,j')-r(x,j)}\Big)^{1-\nu} - 1 \Big] & \nu\not=1 \\
        \sum_{j=0}^J \overline{\tau}_j\log\Big(\sum_{j'\in\mc{A}} e^{r(x,j')-r(x,j)}\Big) & \nu=1
    \end{cases} \\
    & = \begin{cases}
        \frac{1}{1-\nu}\sum_{j=0}^J \overline{\tau}_j\Big[s_j^{\nu-1} -1 \Big] & \nu\not=1 \\
        - \sum_{j=0}^J \overline{\tau}_j\log(s_j) & \nu=1
    \end{cases}
\end{aligned}
\end{equation}
\paragraph{For $\nu=1$:} we can write the following conditional risk:

\begin{equation}
    \begin{aligned}
        \mc{C}_{\Phi_\text{def}}^{\nu=1}(r,x) = -\sum_{j=0}^J \overline{\tau}_j\Big[r(x,j) - \log\sum_{j'\in\mc{A}}e^{r(x,j')}\Big] 
    \end{aligned}
\end{equation}
Then, 

\begin{equation}
    \begin{aligned}
        \frac{\partial \mc{C}_{\Phi_\text{def}}^{\nu=1}}{\partial r(x,i)}(r,x) = -\overline{\tau}_i +  \Big(\sum_{j=0}^J\overline{\tau}_j \Big) s_i^\ast
    \end{aligned}
\end{equation}
At the optimum, we have:
\begin{equation}
    s^\ast(x,i) = \frac{\overline{\tau}_i}{\sum_{j=0}\overline{\tau}_j}
\end{equation}
Then, it follows:
\begin{equation}
    \begin{aligned}
        \mc{C}_{\Phi_\text{def}}^{\ast, \nu=1}(\mc{R},x) & = -\sum_{j=0}^J \overline{\tau}_j\log\Big(\frac{\overline{\tau}_j}{\sum_{j'=0}\overline{\tau}_{j'}}\Big)
    \end{aligned}
\end{equation}

As the softmax parametrization is a distribution $s^\ast \in \Delta^{|\mc{A}|}$, we can write this conditional in terms of entropy with $\boldsymbol{\overline{\tau}} = \{\overline{\tau}_j\}_{j\in\mc{A}}$:
\begin{equation}
    \begin{aligned}
        \mc{C}_{\Phi_\text{def}}^{\ast, \nu=1}(\mc{R},x) & = -\Big(\sum_{k=0}^J \overline{\tau}_k\Big) \sum_{j=0}s^\ast_j\log(s^\ast_j) \\
        & = \Big(\sum_{k=0}^J \overline{\tau}_k\Big) H\Big(\frac{\overline{\tau}}{\sum_{j'=0}\overline{\tau}_{j'}}\Big) \\
        & = \|\overline{\boldsymbol{\tau}}\|_1 H\Big(\frac{\overline{\boldsymbol{\tau}}}{\|\overline{\boldsymbol{\tau}}\|_1}\Big) \quad \text{(as $\tau_j \in \mb{R}^+$)}
    \end{aligned}
\end{equation}

\paragraph{For $\nu\not=1,2$:} The softmax parametrization can be written as a constraint $\sum_{j=0}^J s_j = 1$ and $s_j \ge 0$. Consider the objective
\begin{equation}
\label{eq:tsallis-loss}
\Phi(\mathbf{s})
\;=\;
\frac{1}{1-\nu}\sum_{j=0}^J \overline{\tau}_j \,\,\Bigl[\,s_j^{\,\nu-1} - 1\Bigr].
\end{equation}
We aim to find $\mathbf{s}^* = \bigl(s_0^*,\dots,s_J^*\bigr)$ that minimizes \eqref{eq:tsallis-loss} subject to $\sum_{j=0}^J s_j = 1$.  Introduce a Lagrange multiplier $\lambda$ for the normalization $\sum_{j=0}^J s_j = 1$. The Lagrangian is:

\begin{equation}
\mathcal{L}(\mathbf{s}, \lambda)
\;=\;
\frac{1}{1-\nu}\,\sum_{j=0}^J \overline{\tau}_j\bigl[s_j^{\,\nu-1}-1\bigr]
\;+\;
\lambda \,\Bigl(1 - \sum_{j=0}^J s_j\Bigr).
\end{equation}
We take partial derivatives with respect to $s_i$:
\begin{equation}
\frac{\partial \mathcal{L}}{\partial s_i}
\;=\;
\frac{1}{\,1-\nu\,}\,\overline{\tau}_i\,(\nu-1)\,s_i^{\,\nu-2}
\;-\;
\lambda
\;=\;
0.
\end{equation}
Since $\frac{\nu-1}{\,1-\nu\,} = -1$, we get
\begin{equation}
\overline{\tau}_i\,s_i^{\,\nu-2}
\;=\;
-\lambda
\;\;>\;0
\;\;\Longrightarrow\;\;
s_i^{\,\nu-2}
\;=\;
\frac{\alpha}{\,\overline{\tau}_i\,}
\;\;\text{for some }\alpha>0.
\end{equation}
Hence
\begin{equation}
s_i
\;=\;
\Bigl(\tfrac{\alpha}{\overline{\tau}_i}\Bigr)^{\!\tfrac{1}{\,\nu-2\,}}.
\end{equation}
Summing $s_i$ over $\{i=0,\dots,J\}$ and setting the total to $1$ yields:
\begin{equation}
\sum_{i=0}^J 
\Bigl(\tfrac{\alpha}{\overline{\tau}_i}\Bigr)^{\!\tfrac{1}{\,\nu-2\,}}
\;=\;
1.
\end{equation}
Let 
\begin{equation}
\alpha^{\,\tfrac{1}{\,\nu-2\,}} 
\;=\;
\frac{1}{\sum_{k=0}^J (\tfrac{1}{\overline{\tau}_k})^{\tfrac{1}{\,\nu-2\,}}}
\;\;\Longrightarrow\;\;
\alpha 
\;=\;
\Bigl[\sum_{k=0}^J \bigl(\tfrac{1}{\overline{\tau}_k}\bigr)^{\tfrac{1}{\,\nu-2\,}}\Bigr]^{\!\nu-2}.
\end{equation}
Therefore, for each $i$,
\begin{equation}
s_i^*
\;=\;
\Bigl(\tfrac{\alpha}{\overline{\tau}_i}\Bigr)^{\!\tfrac{1}{\,\nu-2\,}}
\;=\;
\frac{\overline{\tau}_i^{\,\tfrac{1}{\,2-\nu\,}}}
{\displaystyle \sum_{k=0}^J \overline{\tau}_k^{\,\tfrac{1}{\,2-\nu\,}}}.
\end{equation}
This $\{s_i^*\}$ is a valid probability distribution. Let 
\begin{equation}
A 
\;=\;
\sum_{k=0}^J 
\tau_{k}^{\,\tfrac{1}{\,2-\nu\,}}.
\end{equation}
Then the optimum distribution  is
\begin{equation}
s_i^*
\;=\;
\frac{\overline{\tau}_i^{\,\tfrac{1}{\,2-\nu\,}}}{\,A\,}.
\end{equation}
Recall
\begin{equation}
\Phi(\mathbf{s})
\;=\;
\frac{1}{\,1-\nu\,}
  \sum_{j=0}^J \overline{\tau}_j\,\Big[ s_j^{\nu-1} -1 \Big].
\end{equation}
At \(s_j^*\), we have
\begin{equation}
(s_j^*)^{\nu-1}
\;=\;
\bigl(\tfrac{\overline{\tau}_j^{\tfrac{1}{\,2-\nu\,}}}{\,A\,}\bigr)^{\nu-1}
\;=\;
\frac{\overline{\tau}_j^{\,\tfrac{\nu-1}{\,2-\nu\,}}}{\,A^{\nu-1}\!}.
\end{equation}
Hence
\begin{equation}
\sum_{j=0}^J 
\overline{\tau}_j\,\bigl(s_j^*\bigr)^{\nu-1}
\;=\;
\frac{1}{\,A^{\nu-1}\!}
\;\sum_{j=0}^J 
\overline{\tau}_j^{\,1 + \tfrac{\nu-1}{\,2-\nu\,}}
\;=\;
\frac{1}{\,A^{\nu-1}\!}
\;\sum_{j=0}^J
\overline{\tau}_j^{\,\tfrac{1}{\,2-\nu\,}} 
\;=\;
\frac{A}{\,A^{\nu-1}\!}
\;=\;
A^{\,2-\nu}.
\end{equation}
Substituting back,
\begin{equation}
    \mc{C}_{\Phi_\text{def}}^{\ast, \nu\not=1,2}(\mc{R},x) = \frac{1}{\,1-\nu\,}\,\Bigl[
  \Bigl(\sum_{k=0}^J \overline{\tau}_k^{\,\tfrac{1}{\,2-\nu\,}}\Bigr)^{2-\nu} - \sum_{j=0}^J \overline{\tau}_j
\Bigr]
\end{equation}
We can express this conditional risk with a valid $L^{(\frac{1}{2-\nu})}$ norm as long as $\nu \in (1, 2)$. 
\begin{equation}
    \mc{C}_{\Phi_\text{def}}^{\ast, \nu\not=1,2}(\mc{R},x) = \frac{1}{\,\nu-1\,}\,\Bigl[ \|\overline{\boldsymbol{\tau}}\|_1 - \|\overline{\boldsymbol{\tau}}\|_{\frac{1}{2-\nu}} 
\Bigr]
\end{equation}

\paragraph{For $\nu=2$:}
Since \(\sum_{j=0}^J \overline{\tau}_j = S\), we have
\begin{equation}
\mc{C}_{\Phi_{\text{def}}}^{\nu=2}(r,x)
\;=\;
\sum_{j=0}^J \overline{\tau}_j\,\bigl[1 - s_j(r)\bigr]
\;=\;
\sum_{j=0}^J \overline{\tau}_j 
\;-\;
\sum_{j=0}^J \overline{\tau}_j\,s_j(r).
\end{equation}
Hence
\begin{equation}
\inf_{r\in\mc{R}}\,\mc{C}_{\Phi_{\text{def}}}^{\nu=2}(r,x)
\;=\;
S 
\;-\;
\sup_{r\in\mc{R}}\;\sum_{j=0}^J \overline{\tau}_j\,s_j(r).
\end{equation}
Therefore, minimizing \(\mc{C}_{\Phi_{\text{def}}}^{\nu=2}(r,x)\) is equivalent to maximizing
\begin{equation}
F(r)
\;=\;
\sum_{j=0}^J \overline{\tau}_j\,s_j(r).
\end{equation}

Its partial derivative w.r.t.\ \(r_i\) is the standard softmax derivative:
\begin{equation}
\frac{\partial s_j}{\partial r_i}
\;=\;
s_j \,\bigl(\delta_{ij} - s_i\bigr)
\;=\;
\begin{cases}
s_i\,(1 - s_i), & \text{if } i=j, \\
-\;s_j\,s_i, & \text{otherwise}.
\end{cases}
\end{equation}
Hence, for each \(i\),
\begin{equation}
\frac{\partial F}{\partial r_i}
\;=\;
\sum_{j=0}^J 
\overline{\tau}_j \,\frac{\partial s_j}{\partial r_i}
\;=\;
\overline{\tau}_i\,s_i\,(1 - s_i)
\;+\;
\sum_{\substack{j=0\\j\neq i}}^J \overline{\tau}_j\,\bigl(-s_j\,s_i\bigr).
\end{equation}

Factor out \(s_i\):
\begin{equation}
\frac{\partial F}{\partial r_i}
\;=\;
s_i\;\Bigl[\overline{\tau}_i\,(1 - s_i)
\;-\;
\sum_{j\neq i}\overline{\tau}_j\,s_j\Bigr]
\;=\;
s_i\;\Bigl[\overline{\tau}_i - \Bigl(\sum_{j=0}^J \overline{\tau}_j\,s_j\Bigr)\Bigr],
\end{equation}
because \(\sum_{j\neq i}\overline{\tau}_j\,s_j = \sum_{j=0}^J \overline{\tau}_j\,s_j - \overline{\tau}_i\,s_i\). Define \(F(r) = \sum_{j=0}^J \overline{\tau}_j\,s_j(r)\). Then:
\begin{equation}
\frac{\partial F}{\partial r_i}
\;=\;
s_i\,[\,\overline{\tau}_i - F(r)].
\end{equation}
Setting \(\tfrac{\partial F}{\partial r_i}=0\) for each \(i\) implies
\begin{equation}
s_i\,[\,\overline{\tau}_i - F(r)] 
\;=\; 
0,
\quad
\forall\,i.
\end{equation}
Thus, for each index \(i\):
\begin{equation}
s_i = 0
\quad\text{or}\quad
\overline{\tau}_i = F(r).
\end{equation}
To maximize \(F(r)\), notice that:
\begin{itemize}
\item If \(\overline{\tau}_{i^*}\) is strictly the largest among all \(\overline{\tau}_i\), then the maximum is approached by making \(s_{i^*}\approx 1\), so \(F(r)\approx \overline{\tau}_{i^*}\). In the softmax parameterization, this occurs in the limit \(r_{i^*}\to +\infty\) and \(r_k\to -\infty\) for \(k\neq i^*\).  
\item If there is a tie for the largest \(\overline{\tau}_i\), we can put mass on those coordinates that share the maximum value. In any case, the supremum is \(\max_i \overline{\tau}_i\).
\end{itemize}
Hence
\begin{equation}
\sup_{r\in\mc{R}} \;F(r)
\;=\;
\max_{0\le i\le J} \,\overline{\tau}_i.
\end{equation}
Because \(\mc{C}_{\Phi_{\text{def}}}^{\nu=2}(r,x) = S - F(r)\),
\begin{equation}
\inf_{r\in\mc{R}}\,\mc{C}_{\Phi_{\text{def}}}^{\nu=2}(r,x)
\;=\;
S \;-\; \sup_{r\in\mc{R}}\,F(r)
\;=\;
\sum_{j=0}^J \overline{\tau}_j \;-\; \max_{i \in \mc{A}}\,\overline{\tau}_i = \|\overline{\boldsymbol{\tau}}\|_1 - \|\overline{\boldsymbol{\tau}}\|_{\infty}
\end{equation}
Hence the global minimum of \(\mc{C}_{\Phi_{\text{def}}}^{\nu=2}\) is \(\|\overline{\boldsymbol{\tau}}\|_1 - \|\overline{\boldsymbol{\tau}}\|_{\infty}\). 
In the ``softmax'' parameterization, this is only approached in the limit as one coordinate \(r_{i^*}\) goes to \(+\infty\) and all others go to \(-\infty\). 
No finite \(r\) yields an exactly one-hot \(s_i(r)=1\), but the limit is enough to achieve the infimum arbitrarily closely.

It follows for $\overline{\boldsymbol{\tau}}=\{\overline{\tau}_j\}_{j\in\mc{A}}$ and $\nu\geq 0$:
\begin{equation}
\begin{aligned}
    \inf_{r\in\mc{R}}\,\mc{C}_{\Phi_{\text{def}}}^{\nu}(r,x) & = \begin{cases}
        \|\overline{\boldsymbol{\tau}}\|_1 H\Big(\frac{\overline{\boldsymbol{\tau}}}{\|\overline{\boldsymbol{\tau}}\|_1}\Big) & \nu=1 \\
        \|\overline{\boldsymbol{\tau}}\|_1 - \|\overline{\boldsymbol{\tau}}\|_{\infty} & \nu=2 \\
        \frac{1}{\,\nu-1\,}\,\Bigl[ \|\overline{\boldsymbol{\tau}}\|_1 - \|\overline{\boldsymbol{\tau}}\|_{\frac{1}{2-\nu}} 
        \Bigr] & \nu \in (1,2) \\
        \frac{1}{\,1-\nu\,}\,\Bigl[
        \Bigl(\sum_{k=0}^J \overline{\tau}_k^{\,\tfrac{1}{\,2-\nu\,}}\Bigr)^{2-\nu} -\|\overline{\boldsymbol{\tau}}\|_1
        \Bigr] &   \text{otherwise}
    \end{cases} \\
\end{aligned}
\end{equation}

Building on this, we can infer the minimizability gap:

\begin{equation}
    \begin{aligned}
        \mc{U}_{\Phi_{\text{def}}}(\mc{R}) = \mc{E}^\ast_{\Phi_{\text{def}}}(\mc{R})  -\mb{E}_{x}[\inf_{r\in\mc{R}}\,\mc{C}_{\Phi_{\text{def}}}^{\nu}(r,x)] 
    \end{aligned}
\end{equation}

\end{proof}

\section{Proof Lemma \ref{lemma_learning}} \label{proof_lemma}
\lemmalearning*

\begin{proof}
    We define the function $\psi$ as follows:
    \begin{equation}
        \psi :  \begin{array}{ccc}
                \mathcal{L}_1 + \mathcal{L}_2 & \longrightarrow & \mc{L}_1\mc{L}_2\\
                l_1 + l_2 & \longmapsto & (l_1 + l_2 - 1)_+
        \end{array}
    \end{equation}
    Here, $l_1 \in \mathcal{L}_1$ and $l_2 \in \mathcal{L}_2$. The function $\psi$ is 1-Lipschitz as we have $t \mapsto (t-1)_+$ for $t=l_1 + l_2$. Furthermore, given that $\psi$ is surjective and 1-Lipschitz, by Talagrand's lemma \citep{Mohri}, we have:
    \begin{equation}
        \hat{\mathfrak{R}}_S(\psi(\mc{L}_1 + \mc{L}_2)) \leq \hat{\mathfrak{R}}_S(\mc{L}_1 + \mc{L}_2) \leq \hat{\mathfrak{R}}_S(\mc{L}_1)  + \hat{\mathfrak{R}}_S(\mc{L}_2)
    \end{equation}
    This inequality shows that the Rademacher complexity of the sum of the losses is bounded by the sum of their individual complexities. 
\end{proof}

\section{Proof Theorem \ref{learning_bounds}}\label{proof_learning}
\learningbounds*
\begin{proof}
    
We are interested in finding the generalization of \( u = (g, r) \in \mathcal{L} \):
\begin{equation*}
    \begin{aligned}
        \mathfrak{R}_S(\mc{L}) & = \frac{1}{K}\mb{E}_{\sigma}[\sup_{g\in\mc{L}}\sum_{k=1}^K \sigma_k \ell_{\text{def}}(g, r, x_k, y_k, b_k, m_{k})] \\
        & = \frac{1}{K}\mb{E}_{\sigma}[\sup_{g\in\mc{L}}\sum_{k=1}^K \sigma_k \Big( \sum_{j=0}^J c_j1_{r(x_k)=j}\Big)] \\
        & \leq \frac{1}{K}\mb{E}_{\sigma}\Big[\sup_{g\in\mc{L}}\sum_{k=1}^K \sigma_k  c_01_{r(x_k)=0}\Big]   + \frac{1}{K}\sum_{j=1}^J\mb{E}_{\sigma}\Big[\sup_{r\in\mc{R}}\sum_{k=1}^K \sigma_k c_j1_{r(x_k)=j}\Big] \quad \text{(By the subadditivity of $\sup$)}
    \end{aligned}
\end{equation*}
Let's consider $j=0$:
\begin{equation}
    \begin{aligned}
        \frac{1}{K}\mb{E}_{\sigma}\Big[\sup_{g\in\mc{L}}\sum_{k=1}^K \sigma_k  c_01_{r(x_k)=0}\Big] & = \frac{1}{K}\mb{E}_{\sigma}\Big[\sup_{g\in\mc{L}}\sum_{k=1}^K \sigma_k  [1_{h(x_k)\not=y} + \ell_{\text{reg}}(f(x_k), b_k)] 1_{r(x_k)=0}\Big] \\
        & \leq \frac{1}{K}\mb{E}_{\sigma}\Big[\sup_{g\in\mc{L}}\sum_{k=1}^K \sigma_k  1_{h(x_k)\not=y}1_{r(x_k)=0}\Big] + \frac{1}{K}\mb{E}_{\sigma}\Big[\sup_{g\in\mc{L}}\sum_{k=1}^K \sigma_k\ell_{\text{reg}}(f(x_k), b_k)1_{r(x_k)=0}] \Big] \\
        & \leq \Big[\frac{1}{2}\mathfrak{R}_K(\mc{H}) + \mathfrak{R}_K(\mc{R})\Big] + \Big[ \mathfrak{R}_K(\mc{F}) + \mathfrak{R}_K(\mc{R}) \Big] \quad \text{(using Lemma \ref{lemma_learning})} \\ 
        & = \frac{1}{2}\mathfrak{R}_K(\mc{H}) + \mathfrak{R}_K(\mc{F}) + 2\mathfrak{R}_K(\mc{R})
    \end{aligned}
\end{equation}

Let's consider $j>0$: 
\begin{equation}
    \begin{aligned}
        \frac{1}{K}\sum_{j=1}^J\mb{E}_{\sigma}\Big[\sup_{r\in\mc{R}}\sum_{k=1}^K \sigma_k c_j1_{r(x_k)=j}\Big] & \leq \frac{1}{K}\sum_{j=1}^J\mb{E}_{\sigma}\Big[\sup_{r\in\mc{R}}\sum_{k=1}^K \sigma_k 1_{m_{k,j}^h\not=y}1_{r(x_k)=j}\Big] \\
        & + \frac{1}{K}\sum_{j=1}^J\mb{E}_{\sigma}\Big[\sup_{r\in\mc{R}}\sum_{k=1}^K \sigma_k \ell_{\text{reg}}(m_{k,j}^f, b_k)1_{r(x_k)=j}\Big] \\
    \end{aligned}
\end{equation}
Using learning-bounds for single expert in classification \citep{mozannar2021consistent}, we have:
\begin{equation}
    \begin{aligned}
        \frac{1}{K}\mb{E}_{\sigma}\Big[\sup_{r\in\mc{R}}\sum_{k=1}^K \sigma_k 1_{m_{k}^h\not=y}1_{r(x_k)=1}\Big] & \leq  \frac{\mc{D}(m^h \neq y)}{2} \exp\left(-\frac{K \mc{D}(m^h \neq y)}{8}\right) + \mathcal{R}_{K \mc{D}(m^h \neq y)/2}(\mathcal{R})
    \end{aligned}
\end{equation}
Applying it to our case:
\begin{equation}
    \begin{aligned}
        \frac{1}{K}\sum_{j=1}^J\mb{E}_{\sigma}\Big[\sup_{r\in\mc{R}}\sum_{k=1}^K \sigma_k 1_{m_{k,j}^h\not=y}1_{r(x_k)=j}\Big] & \leq  \sum_{j=1}^J \Big(\frac{\mc{D}(m_j^h \neq y)}{2} \exp\left(-\frac{K \mc{D}(m_j^h \neq y)}{8}\right) + \mathcal{R}_{K \mc{D}(m_j^h \neq y)/2}(\mathcal{R})\Big)
    \end{aligned}
\end{equation}
For the last term,
\begin{equation}
    \begin{aligned}
        \frac{1}{K}\sum_{j=1}^J\mb{E}_{\sigma}\Big[\sup_{r\in\mc{R}}\sum_{k=1}^K \sigma_k \ell_{\text{reg}}(m_{k,j}^f, b_k)1_{r(x_k)=j}\Big] & \leq  \sum_{j=1}^J \Big(\max \ell_{reg}(m_{j}^f, t)\mathfrak{R}_K({\mc{R}})\Big)
    \end{aligned}
\end{equation}
Then, it leads to:
\begin{equation*}
    \begin{aligned}
        \mathfrak{R}_K(\mc{L}_{\text{def}}) & \leq \frac{1}{2}\mathfrak{R}_K(\mc{H}) + \mathfrak{R}_K(\mc{F}) + \sum_{j=1}^J \Omega(m_j^h, y) + \Big(\sum_{j=1}^J\max\ell_{\text{reg}}(m_j^f, t) + 2\Big) \mathfrak{R}_K(\mc{R})\\
    \end{aligned}
\end{equation*}
with $\Omega(m_j^h, y)=\frac{\mc{D}(m_j^h \neq y)}{2} \exp\left(-\frac{K \mc{D}(m_j^h \neq y)}{8}\right) + \mathcal{R}_{K \mc{D}(m_j^h \neq y)/2}(\mathcal{R})$
\end{proof}

\section{Experiments} \label{appendix:exp}

\subsection{PascalVOC Experiment}\label{exp:pascal}
Since an image may contain multiple objects, our deferral rule is applied at the level of the entire image \( x \in \mathcal{X} \), ensuring that the approach remains consistent with real-world scenarios.

\begin{table}[H]\label{table:results_agent_pascal}
\centering
\begin{tabular}{@{}lccccc@{}}
\toprule
 & Model  & M$_1$ & M$_2$   \\ 
\midrule
mAP & $39.5$  & $43.3$ & $52.8$ \\
\bottomrule
\end{tabular}
\caption{Agent accuracies on the CIFAR-100 validation set. Since the training and validation sets are pre-determined in this dataset, the agents' knowledge remains fixed throughout the evaluation.}
\label{experts_agent_pascal}
\end{table}

\begin{table}[H]
\centering
\resizebox{0.65\textwidth}{!}{ 
\begin{tabular}{@{}ccccc@{}}
\toprule
Cost $\beta_2$ & mAP (\%)  & Model Allocation (\%)  & Expert 1 Allocation (\%)  & Expert 2 Allocation (\%)  \\
\midrule
0.01 & 52.8 $\pm$ 0.0 & 0.0 $\pm$ 0.0   & 0.0 $\pm$ 0.0   & 100.0 $\pm$ 0.0 \\
0.05 & 52.5 $\pm$ 0.1 & 7.3 $\pm$ 0.8   & 0.0 $\pm$ 0.0   & 92.7 $\pm$ 0.3 \\
0.1  & 49.1 $\pm$ 0.6 & 48.0 $\pm$ 0.7  & 0.0 $\pm$ 0.0   & 52.0 $\pm$ 0.2 \\
0.15 & 44.2 $\pm$ 0.4 & 68.1 $\pm$ 0.3  & 19.7 $\pm$ 0.4  & 12.2 $\pm$ 0.1 \\
0.2  & 42.0 $\pm$ 0.2 & 77.5 $\pm$ 0.2  & 22.5 $\pm$ 0.5  & 0.0 $\pm$ 0.0 \\
0.3  & 40.1 $\pm$ 0.2 & 98.1 $\pm$ 0.0  & 1.9 $\pm$ 0.1   & 0.0 $\pm$ 0.0 \\
0.5  & 39.5 $\pm$ 0.0 & 100.0 $\pm$ 0.0 & 0.0 $\pm$ 0.0   & 0.0 $\pm$ 0.0 \\
\bottomrule
\end{tabular}}
\caption{Detailed results across different cost values $\beta_2$. Errors represent the standard deviation over multiple runs.}
\label{table:results_allocation}
\end{table}

\subsection{MIMIC-IV Experiments} \label{appendix:ehr} \label{appendix_mimic}

MIMIC-IV \citep{johnson2023mimic} is a large collection of de-identified health-related data covering over forty thousand patients who stayed in critical care units. This dataset includes a wide variety of information, such as demographic details, vital signs, laboratory test results, medications, and procedures. For our analysis, we focus specifically on features related to \textit{procedures}, which correspond to medical procedures performed during hospital visits, and \textit{diagnoses} received by the patients.

Using these features, we address two predictive tasks: (1) a classification task to predict whether a patient will die during their next hospital visit based on clinical information from the current visit, and (2) a regression task to estimate the length of stay for the current hospital visit based on the same clinical information.

A key challenge in this task is the severe class imbalance, particularly in predicting mortality. To mitigate this issue, we sub-sample the negative mortality class, retaining a balanced dataset with $K = 5995$ samples, comprising 48.2\% positive mortality cases and 51.8\% negative mortality cases. Our model is trained on 80\% of this dataset, while the remaining 20\% is held out for validation. To ensure consistency in the results, we fixed the training and validation partitions.

\begin{table}[ht]
\centering
\begin{tabular}{@{}lcccc@{}}
\toprule
 & Model & M\(_1 \) & M\(_2 \) \\ 
\midrule
Accuracy  & \( 60.0 \) & \( 39.7 \) & $46.2$\\
Smooth L1 & \( 1.45 \) & \( 2.31 \) & $1.92$\\
\bottomrule
\end{tabular}
\caption{Performance of the agents on the MIMIC-IV dataset, evaluated in terms of accuracy and Smooth L1 loss. We fixed the training/validation set such that the agents' knowledge remains fixed throughout the evaluation.}
\label{experts_ehr}
\end{table}

\end{document}